\documentclass[10pt]{article} 
\usepackage[preprint]{tmlr}


\usepackage{amsmath,amsfonts,bm}









\def\eqref#1{equation~\ref{#1}}









\def\1{\bm{1}}










\DeclareMathAlphabet{\mathsfit}{\encodingdefault}{\sfdefault}{m}{sl}
\SetMathAlphabet{\mathsfit}{bold}{\encodingdefault}{\sfdefault}{bx}{n}













\DeclareMathOperator*{\argmax}{arg\,max}

\usepackage{caption}

\usepackage{latexsym}
\usepackage{amsmath}
\usepackage{amsthm,amssymb}
\usepackage{booktabs}
\usepackage{enumitem}
\usepackage{graphicx}
\usepackage{color}
\usepackage{subfigure}
\usepackage{hyperref}
\usepackage{url}

\usepackage{algorithm}
\usepackage{algorithmic}

\newtheorem{problem}{Problem}
\newtheorem{observation}{{Observation}}

\newtheorem{theorem}{Theorem}
\newtheorem{lemma}[theorem]{Lemma}

\newtheorem{definition}{Definition}

\theoremstyle{remark}
\newtheorem{remark}[theorem]{Remark}

\usepackage{soul}

\usepackage{adjustbox}
\usepackage{footnote}
\usepackage{nicefrac}
\usepackage[usestackEOL]{stackengine}
\usepackage{float}
\makesavenoteenv{tabular}
\usepackage{pgfplots}
\usepgfplotslibrary{colorbrewer}

\usepackage[colorinlistoftodos]{todonotes}
\usepackage{tikz} 
\usepackage{multirow}
\usetikzlibrary{arrows}
\tikzset{
    vertex/.style={circle,draw,minimum size=1.5em},
    edge/.style={->,> = latex'}
}
\usetikzlibrary{arrows.meta}
\usepackage{enumitem}
\usepackage{lipsum}

\newcommand{\supp}{{\rm supp}}

\title{Fairness in Monotone $k$-submodular Maximization: Algorithms and Applications}


\author{\name Yanhui Zhu \email yanhui@iastate.edu \\
      \addr Department of Computer Science\\
      Iowa State University
      \ANDA
      \name Samik Basu \email sbasu@iastate.edu \\
      \addr Department of Computer Science\\
      Iowa State University
      \ANDA
      \name A. Pavan \email pavan@cs.iastate.edu\\
      \addr Department of Computer Science\\
      Iowa State University}




\begin{document}

\maketitle

\begin{abstract}
Submodular optimization has become increasingly prominent in machine learning and fairness has drawn much attention. In this paper, we propose to study the fair $k$-submodular maximization problem and develop a $\nicefrac{1}{3}$-approximation greedy algorithm with a running time of $\mathcal{O}(knB)$. To the best of our knowledge, our work is the first to incorporate fairness in the context of $k$-submodular maximization, and our theoretical guarantee matches the best-known $k$-submodular maximization results without fairness constraints. In addition, we have developed a faster threshold-based algorithm that achieves a $(\nicefrac{1}{3} - \epsilon)$ approximation with $\mathcal{O}(\frac{kn}{\epsilon} \log \frac{B}{\epsilon})$ evaluations of the function $f$. Furthermore, for both algorithms, we provide approximation guarantees when the $k$-submodular function is not accessible but only can be approximately accessed. We have extensively validated our theoretical findings through empirical research and examined the practical implications of fairness. Specifically, we have addressed the question: ``What is the price of fairness?" through case studies on influence maximization with $k$ topics and sensor placement with $k$ types. The experimental results show that the fairness constraints do not significantly undermine the quality of solutions.
\end{abstract}

\section{Introduction}
\label{sec:intro}
Submodular function optimization is a fundamental optimization problem, applied extensively across diverse fields such as feature compression, deep learning, sensor placement, and information diffusion, among others~\cite{bateni2019categorical,el2022data,kempe2003maximizing,padmanabhan2018influence,li2023submodularity}. Over the last decade, various versions of submodular optimization problems have garnered substantial attention from the research community.

While submodular functions can model interesting complex interactions in many applications, some problems cannot be modeled adequately by submodular functions. For example,  consider the problem of influence maximization with $k$ topics~\cite{ohsaka2015monotone}. Influence maximization involves identifying a seed set within a social network that can achieve the greatest possible spread of information. This subset selection problem is frequently modeled as a submodular maximization problem~\cite{kempe2003maximizing}. However, if the information spread includes multiple topics with varying effects for each topic on the network, the problem becomes more complex. Specifically, a seed set contains elements with each being assigned a specific topic. In such cases, the standard submodular maximization approach is insufficient to find a good seed set. 

Another example is the sensor placement with $k$ types. Consider the case where we want to monitor multiple statistics through sensors, e.g., temperature, humidity, and lights. Only one sensor can be placed at a location. The goal is to place sensors in the most informative locations to collect as much information as possible~\cite{Krause2007NearoptimalOS}. The standard submodular maximization model fails to account for scenarios where each sensor can be good at detecting a specific measure. 

To effectively handle the above applications where each topic or type of element has varying effects on the overall utility, one can formulate the problems as $k$-submodular functions optimization problem~\cite{kolmogorov2011submodularity,huber2012towards}. A formal definition of $k$-submodular functions is provided in the Preliminaries section.

\textbf{Hardness.} Maximizing a monotone $k$-submodular function even without constraints is challenging. Iwata et al.~\cite{Iwata2015ImprovedAA} achieved an approximation guarantee of $\frac{k}{2k-1}$ using $\mathcal{O}(kn)$ number of function evaluations.
The authors also showed that achieving an approximation ratio better than $\frac{k+1}{2k}$ even for the unconstrained case is NP-hard.  
Ohsaka et al.~\cite{ohsaka2015monotone} initialized the study of size-constrained maximization problems, including individual size (IS) and total size (TS). IS constraint specifies that the number of elements selected for each type $i \in [k]$ should not exceed an upper bound $u_i$, while the TS constraint specifies that the solution size should be at most $B$ regardless of the types. 
The approximation algorithms due to Ohsaka et al.~\cite{ohsaka2015monotone} for IS and TS size-constrained problems admit $1/3$ and $1/2$-approximation ratios, respectively.

\textbf{Fairness in $k$-submodular maximization.}
The solutions produced by the TS-constrained algorithms can be both over-represented and underrepresented. 
Although the IS-constrained problem upper-bounds the number of elements selected for each type, the solutions can still be underrepresented. Thus, the solutions may not guarantee {\em fairness}; fairness refers to the property
of solution where the constituents of solution of each type are neither under-represented nor over-represented. 
In short, the constraint over the solution states that the number of elements of each type must fall within specified
lower and upper bounds (each type may have different bounds). This notion of fairness using range constraints
has been examined in various previous works~\cite{celis2018fair,chierichetti2017fair,el2020fairness}, where 
the range constraints are used to ensure a balanced representation of different types/attributes of entities in the overall solution (for example, race, ethnicity, gender).



For example, in an influence maximization problem with $k$ groups, we are given a set $V$ of $n$ users, where each user can be assigned a type/attribute $i$. The task is to select a subset of users $\mathbf{X}=(X_1,\cdots,X_k)$ where $X_i$'s 
are $k$ disjoint groups, and in each group, the people share the same type/attribute. The solution $\mathbf{X}$
is fair if $\ell_i \leq |X_i| \leq u_i$ for a given choice of lower and upper bounds $\ell_i,u_i$ for each group $i\in [k]$ and $\sum_{i=1}^k |X_i| \leq B$ for overall size bound $B$.


In this context of fairness, a natural question is: {\em can one develop fast approximation algorithms with theoretical guarantees that address the fair $k$-submodular maximization problem?} In this paper, we address this problem.
The problem gets even more challenging when the submodular function $f$, under consideration, is unavailable; instead, it is only possible to query a surrogate function $\Tilde{f}$ that \textit{approximates} $f$. This is referred
to as approximate oracle for $f$ ~\cite{cohen2014sketch}. 
We show that our solution strategy can handle this scenario with approximate oracles as well.



\subsection{Our Contributions}
Firstly, we formulate the {\em fair $k$-submodular maximization} problem and then propose a $\nicefrac{1}{3}$-approximation greedy algorithm {\sc Fair-Greedy} that requires $\mathcal{O}(knB)$ oracle calls. Our work is among the first to consider fairness in $k$-submodular maximization and present approximation guarantees \footnote{We note that a parallel work \cite{hu2024approximation} proposed an alternative fairness formulation for this problem when this work was under review.}. Note that in submodular maximization, it is standard to measure the runtime using the oracle calls or oracle evaluations -- number of times the underlying ($k$-)submodular function $f$ is evaluated.

We incorporate the threshold technique and propose a faster algorithm {\sc Fair-Threshold} that achieves $(\nicefrac{1}{3}-\epsilon)$-approximation and runs within $\mathcal{O}(\frac{kn}{\epsilon}\log \frac{B}{\epsilon})$ evaluations of function $f$. The runtime of {\sc Fair-Threshold} is improved from linearly dependent on $B$ to logarithmically dependent on $B$, making the algorithm significantly faster for large $B$.

For both {\sc Fair-Greedy} and {\sc Fair-Threshold}, we derive approximation guarantees when approximate
oracle $\Tilde{f}$ is available instead of $f$.

Lastly, we empirically validate our theoretical results by designing experiments and comparing them with various baselines. We empirically address the question: \textit{``What is the price of fairness?"} with the applications of influence maximization with $k$ topics and sensor placement with $k$ types.

\subsection{Related Work}
Recent research has explored constrained bi-submodular and $k$-submodular maximization, with applications in sensor placement and feature selection~\cite{singh2012bisubmodular,ward2014maximizing}. This section provides an overview of known results on $k$-submodular maximization.  For size-constrained problems. Ohsaka et al.~\cite{ohsaka2015monotone} introduced greedy algorithms that achieve $1/2$ and $1/3$ approximation guarantees for TS and IS constraints, respectively. Stochastic methods~\cite{mirzasoleiman2015lazier} and threshold techniques~\cite{badanidiyuru2014fast} have also been employed to reduce query complexity~\cite{ohsaka2015monotone,nie2023size}. 
In another approach, Qian et al.~\cite{qian2017constrained} proposed a multi-objective evolutionary algorithm for TS-constrained problems, demonstrating a $1/2$-approximation with $\mathcal{O}(knB \log ^2 B)$ oracle evaluations. Matsuoka et al.~\cite{matsuoka2021maximization} leveraged curvature properties in $k$-submodular functions, weak $k$-submodularity, and approximate $k$-submodularity to show improvements in the approximation ratios. Additionally, Zheng et al.~\cite{zheng2021maximizing} provided theoretical guarantees for scenarios where the objective function is only approximately $k$-submodular.

In the context of knapsack constraints, Tang et al.~\cite{Tang2021OnMaximizing} developed an algorithm inspired by~\cite{khuller1999budgeted,Sviridenko2004ANO} that achieves a $\frac{1}{2}(1-\frac{1}{e})$ approximation with $\mathcal{O}(k^4n^5)$ function evaluations. Chen et al.~\cite{Chen2022Monotone} proposed a partial-enumeration algorithm, inspired by~\cite{khuller1999budgeted}, achieving a $\frac{1}{4}(1-\frac{1}{e})$ approximation with $\mathcal{O}(kn^2)$ evaluations. For the streaming setting, Pham et al.~\cite{pham2021streaming} introduced an algorithm that guarantees a $\frac{1}{4}-\epsilon$ approximation with $\mathcal{O}(\frac{kn}{\epsilon}\log B)$ oracle evaluations. 

In the case of matroid constraints, Sakaue et al.~\cite{sakaue2017maximizing} proposed an algorithm with a $1/2$-approximation, while Matsuoka et al.~\cite{matsuoka2021maximization} developed an algorithm that achieves a $\frac{1}{1+c}$ approximation, where $c$ represents the curvature. Ene et al.~\cite{ene2022streaming} advanced the research in streaming scenarios by proposing an algorithm that attains up to a 0.3-approximation using $\mathcal{O}(nk)$ evaluations.

Though there has been a considerable body of work in $k$-submodular maximization, none of these works study the notion of fairness.

\textbf{Organization.}
Section~\ref{sec:prelim} introduces the preliminary definitions, notations, and formal problem statement for the fair $k$-submodular maximization. In Sections~\ref{sec:fair-greedy-monotone} and \ref{sec:fast-fair-greedy}, we present the greedy and threshold-based faster algorithms, along with the theoretical analysis. Section~\ref{sec:experiements} covers the applications, experimental setup, datasets, baseline algorithms, and results.

\section{Preliminaries}
\label{sec:prelim}

Let $V$ be the ground set of size $n$. A $k$-submodular function $f$ is defined on $(k+1)^V = \{ (X_1, \cdots, X_k) \mid X_i \subseteq V ~\text{for~all~} i \in [k] \text{~and~} X_i \cap X_j = \emptyset ~~ \forall i \neq j \}$. 

For $\mathbf{A} = (X_1, \cdots, X_k)$, $\mathbf{B} = (Y_1, \cdots, Y_k) \in (k+1)^V$, we denote $\mathbf{A} \preceq \mathbf{B}$ if $X_i \subseteq Y_i, ~ \forall i \in [k]$. The marginal gain of adding an element $e$ with type $i$ to $\mathbf{A}$ is $\Delta_{e, i} f(\mathbf{A}) \triangleq f((X_1, \cdots, X_i \cup \{e\}, \cdots, X_k)) - f((X_1, \cdots, X_k))$. 
Denote the indicator vector format of a single element-type pair $(e,i)$ as $\mathbb{I}_{(e,i)}$.
Thus, for brevity, we have  $\mathbf{A} + \mathbb{I}_{(e,i)} = (X_1, \cdots, X_i \cup \{e\}, \cdots, X_k)$. 
In this paper, we use vector representation and set representation interchangeably; in particular, we use $\supp(\cdot)$ to convert a vector to a set, specifically, $\supp(\mathbf{A}) = \{e \in V \mid \mathbf{A}(e) \neq 0 \}$ and $\supp_i(\mathbf{A}) = \{e \in V \mid \mathbf{A}(e) =i \}$. 

The $k$-submodular functions can be formally defined as follows.
\begin{definition}[$k$-submodular functions]
    For $\mathbf{A}, \mathbf{B} \in (k+1)^V$,
    a function $f:(k+1)^V \rightarrow \mathbb{R}$ is $k$-submodular if for any $\mathbf{A} \preceq \mathbf{B}$, $e \in V \setminus \supp(\mathbf{B}), i\in [k]$, 
    \[\Delta_{e, i} f(\mathbf{A}) \geq \Delta_{e, i} f(\mathbf{B}).\] 


    $f$ is monotone if for $\mathbf{A} \preceq \mathbf{B}$, $f(\mathbf{A}) \leq f(\mathbf{B})$.
\end{definition}

In some applications, such as influence diffusion, only an approximate oracle value is efficiently accessible. Thus, we present the definition of approximate $k$-submodular functions. 

\begin{definition}[$\delta$-approximate $k$-submodular \cite{zheng2021maximizing}]
\label{def:approximate}
    For a real number $\delta \in [0,1)$, a function $\Tilde{f}:(k+1)^V \rightarrow \mathbb{R}$ is $\delta$-approximate $k$-submodular to a $k$-submodular function $f$ if for any $\mathbf{A} \in (k+1)^V$,
    \[ (1-\delta)f(\mathbf{A}) \leq \Tilde{f}(\mathbf{A}) \leq (1+\delta)f(\mathbf{A}). \]
\end{definition}
If $\delta=0$, $\Tilde{f}$ is the exact oracle.

Given a total budget $B$ across all types, an upper limit $u_i$ and lower limit $\ell_i$ for the number of elements selected with type $i \in [k]$, we define the family of {\em fair} solutions as follows. 
\begin{definition}
\label{def:feasible}
The family of feasible sets for fair $k$-submodular is
\begin{align*}
    \mathcal{F} = \{ &\mathbf{s} \in (k+1)^V \mid \ell_i \leq |\supp_i(\mathbf{s})| \leq u_i\\
    &~\text{for~all}~i\in [k] ~\text{and}~\sum_{i=1}^k|\supp_i(\mathbf{s})| \leq B \}.
\end{align*}
\end{definition}
For convenience, we define the set of element-type pair that is feasible to a partial solution $\mathbf{s}$ as $\mathcal{F}(\mathbf{s}) = \{ (e,i) \in \left(V\setminus \supp(\mathbf{s})\right) \times [k] \mid \mathbf{s}+\mathbb{I}_{(e,i)} \in \mathcal{F} \}$.

The problem of maximizing a $k$-submodular function with fairness constraint can be formally formulated.
\begin{problem}[FM$k$SM]
\label{problem:fair}
    Given a monotone $k$-submodular function $f$, a total budget $B$, upper bounds $u_i$ and lower bounds $\ell_i$ for each type $i \in [k]$, Fair Monotone $k$-submodular Maximization problem (FMkSM) finds $\mathbf{s} \in \mathcal{F}$ such that $f(\mathbf{s})$ is maximized, i.e., 
    \[ \argmax_{\mathbf{s} \in \mathcal{F}} f(\mathbf{s}). \]
\end{problem}


\begin{remark}
    We assume $\sum_i^k u_i \geq B$; otherwise, Problem \ref{problem:fair} reduces to the individual size constrained problem \cite{ohsaka2015monotone}. We also assume $\mathcal{F}\neq \emptyset$, i.e., $\sum_i^k \ell_i \leq B$.
\end{remark}

For the sake of algorithm design, we call a solution $\mathbf{s}\in (k+1)^V$ {\em extendable} if $\mathbf{s} \in \mathcal{F}$. The following definition is derived from Definition \ref{def:feasible}, which can be used to efficiently verify if a solution is feasible or not. 
\begin{definition}
    \label{def:extendable}
     $\mathbf{s}$ is extendable if and only if
    \begin{align*}
        \forall i \in [k], |\supp_i(\mathbf{s})| \leq u_i
        \mbox{ and }  \sum_{i=1}^k \max\{|\supp_i(\mathbf{s})|, \ell_i\} \leq B.
    \end{align*}
\end{definition}

\section{Fair Greedy Algorithms}
\label{sec:fair-greedy-monotone}
In this section, we present a greedy algorithm (Algorithm \ref{alg:fair}) to solve the $k$-submodular maximization with fairness constraint problem (Problem \ref{problem:fair}). We note that if the sum of the lower bounds is less than the total budget $B$, there is no feasible solution. If the sum of the upper bounds for each type is less than $B$, then Problem \ref{problem:fair} is reduced to the individual size constraint problem and the $\frac{1}{3}$-approximation algorithm by \cite{ohsaka2015monotone} can be used. 

Our greedy algorithm iteratively selects the best {\em extendable} element-type pair until the solution contains exactly $B$ elements. The candidate solutions should be {\em extendable} to ensure fairness. The extendable pairs in line 3 can be maintained efficiently by Definition \ref{def:extendable}. Our algorithm admits a $\frac{1}{3}$-approximation ratio and matches the best non-fair IS-constrained results by \cite{ohsaka2015monotone}. The theoretical results and analysis can be found in Theorem \ref{th:fair-greedy} below.

\begin{algorithm}[tb]
\caption{{\sc Fair-Greedy}}
\label{alg:fair}
\small
\textbf{Input}: Monotone $k$-submodular $f:(k+1)^V \rightarrow \mathbb{R}$, total budget $B$, upper bounds $u_i$ and lower bounds $\ell_i$ for $i \in [k]$.\\
\textbf{Output}: A fair solution $\mathbf{s} \in \mathcal{F}$.
\begin{algorithmic}[1] 
\STATE $\mathbf{s} \gets \mathbf{0}$.
\FOR{$j \gets 1$ \textbf{to} $B$} 
\STATE $\mathcal{I} \gets \{(e, i) \in V\setminus\supp(\mathbf{s}) \times [k] \mid \mathbf{s} + (e, i) \text{~is \textit{extendable}.} \}$
\STATE $(e,i) \gets \argmax_{(e',i')\in \mathcal{I}} \Delta_{e',i'}f(\mathbf{s})$
\STATE $\mathbf{s}(e) \gets i$
\ENDFOR
\STATE \textbf{return} $\mathbf{s}$
\end{algorithmic}
\end{algorithm}


\begin{theorem}
    Let $\mathbf{o}$ be the optimal solution and $\mathbf{s}$ be the output of Algorithm \ref{alg:fair}. The algorithm admits a $\frac{1}{3}$-approximation within $\mathcal{O}(knB)$ oracle evaluations to $f$ for Problem \ref{problem:fair} .
    \label{th:fair-greedy}
\end{theorem}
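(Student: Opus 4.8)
The plan is to prove the two claims separately: the oracle complexity by direct counting, and the $\tfrac13$ ratio by a step-by-step exchange argument against the optimum. For the running time, note that the loop runs exactly $B$ times, and in each iteration line~4 evaluates the marginal gain $\Delta_{e',i'} f(\mathbf{s})$ for every candidate pair $(e',i')\in\mathcal{I}$. Since $|\mathcal{I}|\le kn$ and extendability of each pair is checkable in $O(k)$ time via Definition~\ref{def:extendable} (by maintaining the running counts $|\supp_i(\mathbf{s})|$), each iteration uses $O(kn)$ oracle evaluations, for a total of $\mathcal{O}(knB)$.

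For the approximation guarantee, let $(e_1,i_1),\dots,(e_B,i_B)$ be the pairs chosen in order, write $\mathbf{s}^{(j)}$ for the partial solution after $j$ iterations, so $\mathbf{s}^{(0)}=\mathbf{0}$ and $\mathbf{s}^{(B)}=\mathbf{s}$, and note $\Delta_{e_j,i_j}f(\mathbf{s}^{(j-1)})=f(\mathbf{s}^{(j)})-f(\mathbf{s}^{(j-1)})$. Because $f$ is monotone and $\mathcal{F}$ permits exactly $B$ elements, I may take the optimum $\mathbf{o}$ to satisfy $|\supp(\mathbf{o})|=B$. The strategy is to interpolate from $\mathbf{o}$ to $\mathbf{s}$ through feasible solutions $\mathbf{o}=\mathbf{o}^{(0)},\mathbf{o}^{(1)},\dots,\mathbf{o}^{(B)}=\mathbf{s}$, each of size $B$, maintaining the invariant that $\mathbf{o}^{(j)}$ agrees with $\mathbf{s}^{(j)}$ on $\supp(\mathbf{s}^{(j)})$ (hence $\mathbf{s}^{(j)}\preceq\mathbf{o}^{(j)}$), and establishing the per-step bound
\[ f(\mathbf{o}^{(j-1)})-f(\mathbf{o}^{(j)}) \le 2\,\Delta_{e_j,i_j} f(\mathbf{s}^{(j-1)}). \]
Summing over $j=1,\dots,B$ telescopes the left side to $f(\mathbf{o})-f(\mathbf{s})$ and the right side to $2\big(f(\mathbf{s})-f(\mathbf{0})\big)$, so normalizing $f(\mathbf{0})=0$ gives $f(\mathbf{o})\le 3f(\mathbf{s})$.

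The per-step update forces $e_j$ to take type $i_j$: if $e_j$ is already present in $\mathbf{o}^{(j-1)}$ I reassign its type to $i_j$, and if $e_j$ is absent I insert it at type $i_j$ and delete a suitably chosen optimal pair $(e',i')$ to preserve the size and the bounds. The loss then splits into a type-reassignment term and an eviction term, each controlled by the three ingredients available from the statement. Orthant submodularity (the defining inequality of $k$-submodularity) gives $\Delta_{e,i} f(\mathbf{o}^{(j-1)}) \le \Delta_{e,i} f(\mathbf{s}^{(j-1)})$ from $\mathbf{s}^{(j-1)}\preceq\mathbf{o}^{(j-1)}$; monotonicity gives $\Delta_{e_j,i_j} f(\mathbf{s}^{(j-1)})\ge 0$, so introducing $e_j$ never hurts; and the greedy rule gives $\Delta_{e_j,i_j} f(\mathbf{s}^{(j-1)})\ge \Delta_{e',i'} f(\mathbf{s}^{(j-1)})$ for every pair $(e',i')$ that is \emph{extendable} from $\mathbf{s}^{(j-1)}$. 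A reassignment loss of at most $\Delta_{e_j,i_j} f(\mathbf{s}^{(j-1)})$ plus an eviction loss of at most $\Delta_{e',i'} f(\mathbf{s}^{(j-1)})\le\Delta_{e_j,i_j} f(\mathbf{s}^{(j-1)})$ yields the factor $2$.

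The main obstacle — and where fairness makes this harder than the individual-size proof of \cite{ohsaka2015monotone} — is constructing the eviction/reassignment coupling so that every intermediate $\mathbf{o}^{(j)}$ stays in $\mathcal{F}$ and every evicted pair $(e',i')$ is extendable from $\mathbf{s}^{(j-1)}$, since only then does the greedy-optimality inequality apply. The lower bounds $\ell_i$ are the crux: extendability (Definition~\ref{def:extendable}) reserves budget to meet unmet lower bounds, so late greedy steps may be forced onto particular types, and evicting an optimal element of a different type could push some count below its lower bound or destroy extendability. I would resolve this by setting up the coupling type-class by type-class and invoking a Hall-type counting argument: since $\mathbf{o}$ itself respects all lower and upper bounds and shares the total size $B$ with the greedy solution, the multiset of optimal types can be matched to the greedy types so that each swap either leaves a type count unchanged or moves it strictly toward feasibility, guaranteeing a feasible, extendable partner at every step. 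Verifying that such a matching exists and remains consistent across all $B$ steps is the technical heart of the argument.
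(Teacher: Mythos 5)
Your proposal reproduces the standard exchange framework that the paper also uses (interpolate $\mathbf{o}=\mathbf{o}^0,\dots,\mathbf{o}^B=\mathbf{s}$ while maintaining $\mathbf{s}^j\preceq\mathbf{o}^j$, prove the per-step bound $f(\mathbf{o}^{j-1})-f(\mathbf{o}^j)\le 2\,\Delta_{e^j,i^j}f(\mathbf{s}^{j-1})$, telescope), and the running-time count is fine. However, the approximation half has a genuine gap: the one claim that distinguishes this theorem from the individual-size result of Ohsaka et al.\ --- namely that whenever the coupling must evict or compare against an optimal pair, there \emph{exists} such a pair that is extendable from $\mathbf{s}^{j-1}$ and whose removal keeps the hybrid feasible --- is exactly the step you do not prove. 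You correctly flag it as ``the technical heart of the argument,'' but then only gesture at a ``Hall-type counting argument'' matching type classes globally across all $B$ steps, without constructing the matching or verifying Hall's condition. As written, the greedy-optimality inequality $\Delta_{e^j,i^j}f(\mathbf{s}^{j-1})\ge\Delta_{e',i'}f(\mathbf{s}^{j-1})$ is unjustified for the evicted pair, so the factor-$2$ per-step bound is not established.

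The paper closes this gap with a much more local argument (Lemma~\ref{lem:empty}): the invariant $\supp_i(\mathbf{s}^{j-1})\subseteq\supp_i(\mathbf{o}^{j-1})$ for all $i$ together with $|\supp(\mathbf{s}^{j-1})|<|\supp(\mathbf{o}^{j-1})|=B$ forces, by pigeonhole, some type $c$ with $Q_c^j\neq\emptyset$; feasibility of both $\mathbf{s}^{j-1}$ and $\mathbf{o}^{j-1}$ then sandwiches $\ell_c\le|\supp_c(\mathbf{s}^{j-1})|<|\supp_c(\mathbf{o}^{j-1})|\le u_c$, so any element of $Q_c^j$ is extendable per Definition~\ref{def:extendable}. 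No global matching is needed. A second, related issue: in your ``$e^j$ already present in $\mathbf{o}^{j-1}$ under type $i'$'' case you propose to simply reassign its type to $i^j$, but that shifts the type counts of the hybrid (type $i'$ loses one, type $i^j$ gains one) and can violate $\ell_{i'}$ or $u_{i^j}$; the paper's Case~1A avoids this by a compensating double swap (simultaneously moving some $q^j\in Q_{i^j}^j$ from type $i^j$ to type $i'$), which preserves every type count exactly. Both repairs are needed before your telescoping argument goes through.
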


The running time is $\mathcal{O}(knB)$ since the number of iterations is bounded by $B$ and the size of $\mathcal{I}$ is at most $nk$.  The approximation analysis of the algorithm necessary for the proof of the above theorem relies on examining a specific way of altering the 
optimal solution $\mathbf{o}$ in each iteration based on the algorithm. The result of such alteration is denoted by $\mathbf{o}^j$ after iteration $j$. 
Finally, the relationship between $\mathbf{s}$ (result of the algorithm)
and the $\mathbf{o}^{B}$ is used to realize the approximation factor. The
strategy has been widely used in the analysis of $k$-submodular 
optimization problems~\cite{ohsaka2015monotone, zheng2021maximizing,iwata2016improved,ward2016maximizing,sakaue2017maximizing}.
However, it is worth noting that the existing proofs do not immediately
leads to the proof of the above theorem, primarily due to the fairness
constraint. 

\subsection{Construction of $\mathbf{o}^j$}

Due to the monotonicity of $f$, exactly $B$ element-type pairs can be selected by the
greedy solution at the end of the algorithm. The optimal solution size can also be
assumed to be $B$. If the optimal solution contains less than $B$ element-type pairs,
we can append feasible elements without decreasing the utility value. 
Similar to $\mathbf{o}^j$, $\mathbf{s}^j$ denotes the partial greedy solution as of iteration $j$ of  Algorithm \ref{alg:fair}.

Initially, $\mathbf{o}^0 = \mathbf{o}$ and $\mathbf{s}^0 = \mathbf{0}$.
Consider that $i\in [k]$,  $Q_i^j = \supp_i(\mathbf{o}^{j-1})\setminus \supp_i(\mathbf{s}^{j-1})$. That is, 
$Q_i^j$ is the set of elements that supports type $i$ in $\mathbf{o}_{j-1}$ 
and does not appear as supports for type $i$ in $\mathbf{s}^{j-1}$. Next,
we consider the different cases with respect to the element $e^j$ selected
to support the type $i^j$ in the greedy algorithm in iteration $j$.

\noindent
\textbf{Case 1 If $e^j \in Q_{i'}^j$ for some $i' \neq i^j$.}\\
In this case, the element $e^j$ is selected by the greedy algorithm and is present the optimal solution $\mathbf{o}$, but it is supporting different types. We have two sub-cases to consider.
\begin{itemize}
    \item \textbf{Case 1A}: If $Q_{i^j}^j \neq \emptyset$, we can identify an arbitrary element $q^j \in Q_{i^j}^j$.
    Then, we construct $\mathbf{o}^j$ as follows.
            
    $\mathbf{o}^{j-\frac{1}{2}}$ is a copy of $\mathbf{o}^{j-1}$ except that $(e^j, i')$ and $(q^j, i^j)$ are removed. $\mathbf{o}^{j}$ is constructed by adding $(e^j, i^j)$ and $(q^j, i')$ to 
    $\mathbf{o}^{j-\frac{1}{2}}$. Formally, 
    \[
        \begin{array}{rcl}
            \mathbf{o}^{j-\frac{1}{2}} & = & 
            \mathbf{o}^{j-1}-\mathbb{I}_{(e^j,i')} - \mathbb{I}_{(q^j,i^j)} \\
            \mathbf{o}^{j} & = & \mathbf{o}^{j-\frac{1}{2}}+\mathbb{I}_{(e^j,i^j)} + \mathbb{I}_{(q^j,i')}.
        \end{array}
    \]
            
    \item \textbf{Case 1B}: If $Q_{i^j}^j = \emptyset$, we can identify  an arbitrary element $q^j \in Q_{c}^j$ for some $c \in [k]$. We will prove the existence of $Q_c^j \neq \emptyset$ subsequently in Lemma \ref{lem:empty} below.
            
    We construct $\mathbf{o}^j$ as follows: 
    \[
        \mathbf{o}^{j-\frac{1}{2}}=\mathbf{o}^{j-1}- \mathbb{I}_{(q_j,c)} \mbox{ and  }\mathbf{o}^{j}=\mathbf{o}^{j-\frac{1}{2}}+\mathbb{I}_{(e^j,i^j)}.
    \]
\end{itemize}

\noindent
\textbf{Case 2: If $e^j \notin Q_{i'}^j$ for all $i' \neq i^j$.}\\
This case includes the mutually exclusive scenarios that $(e^j, i^j)$ is selected by the optimal solution $\mathbf{o}$ or $e^j$ is not in $\mathbf{o}$. Thus, we define $q^j$ as follows for this case.
\[q^j = \begin{cases}
    e^j  & \text{if } e^j \in Q_{i^j}^j \\
    \text{arbitrary element in } Q_{i^j}^j & \text{if } e^j \notin Q_{i^j}^j \text{ and } Q_{i^j}^j \neq \emptyset\\
    \text{arbitrary element in } Q_{c}^j   & \text{if } e^j \notin Q_{i^j}^j \text{ and } Q_{i^j}^j = \emptyset
\end{cases}\]
As mentioned before, We will prove the existence of $Q_c^j \neq \emptyset$ for the last case in Lemma \ref{lem:empty}.
    
In this case, 
\[
    \mathbf{o}^{j-\frac{1}{2}}=\mathbf{o}^{j-1}- \mathbb{I}_{(q_j,type(q_j))}
    \mbox{ and } \mathbf{o}^{j}=\mathbf{o}^{j-\frac{1}{2}}+\mathbb{I}_{(e_j,i^j)}.
\]

\subsection{Properties of $\mathbf{o}^j$ and $Q^j_i$}
    
    \begin{observation}
    \label{obs:sb-ob}
        For every iteration $j\in [B]$, we have $\mathbf{s}^{j-1} \prec \mathbf{o}^{j-\frac{1}{2}}$. At the end of iteration $B$,   $\mathbf{o}^B = \mathbf{s}^B$. 
    \end{observation}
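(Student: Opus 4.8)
The plan is to prove both statements together by induction on the iteration index $j$, carrying the invariant that $\mathbf{s}^{j-1} \preceq \mathbf{o}^{j-1}$ and that $|\supp(\mathbf{o}^{j-1})| = B$. (I read the stated $\prec$ as the containment $\preceq$ of the Preliminaries; genuine strictness at $j=B$ is ruled out by the cardinality count and is in fact where the terminal equality $\mathbf{o}^B=\mathbf{s}^B$ comes from.) The base case is immediate: $\mathbf{s}^0=\mathbf{0}\preceq\mathbf{o}=\mathbf{o}^0$, and $\mathbf{o}^0$ has exactly $B$ supported elements by the normalization of the optimum made just before the construction.

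The first half of the inductive step is the intermediate containment $\mathbf{s}^{j-1}\preceq\mathbf{o}^{j-\frac{1}{2}}$, which is precisely the first assertion of the observation. The key point, to be checked uniformly across Cases 1A, 1B, and 2, is that every element-type pair deleted from $\mathbf{o}^{j-1}$ in forming $\mathbf{o}^{j-\frac{1}{2}}$ lies outside $\mathbf{s}^{j-1}$. Indeed, each deleted $q^j$ is chosen from some $Q_c^j=\supp_c(\mathbf{o}^{j-1})\setminus\supp_c(\mathbf{s}^{j-1})$, so $q^j\notin\supp_c(\mathbf{s}^{j-1})$; and the pair $(e^j,i')$ deleted in Case 1A uses the greedy element $e^j$, which the algorithm draws from $V\setminus\supp(\mathbf{s}^{j-1})$. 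Combining these with the inductive containment $\mathbf{s}^{j-1}\preceq\mathbf{o}^{j-1}$ and the fact that each element carries a single type, none of the removed pairs belong to $\mathbf{s}^{j-1}$, so deleting them preserves containment.

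The second half pushes containment and the cardinality bookkeeping through the additions. In every case $\mathbf{o}^j$ is formed from $\mathbf{o}^{j-\frac{1}{2}}$ by inserting $(e^j,i^j)$ (together with $(q^j,i')$ in Case 1A), whence $(e^j,i^j)\in\mathbf{o}^j$; since $\mathbf{s}^j=\mathbf{s}^{j-1}+\mathbb{I}_{(e^j,i^j)}$ and $\mathbf{s}^{j-1}\preceq\mathbf{o}^{j-\frac{1}{2}}\preceq\mathbf{o}^j$, this gives $\mathbf{s}^j\preceq\mathbf{o}^j$. Each case re-inserts exactly as many pairs as it deleted (two in Case 1A, one in Cases 1B and 2), so $|\supp(\mathbf{o}^j)|=B$ is preserved, closing the induction. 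For the terminal equality I would then argue by cardinality: the greedy inserts a fresh element in each of the $B$ rounds so $|\supp(\mathbf{s}^B)|=B$, the invariant gives $|\supp(\mathbf{o}^B)|=B$, and $\mathbf{s}^B\preceq\mathbf{o}^B$; containment with equal size forces $\mathbf{o}^B=\mathbf{s}^B$.

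I expect the main obstacle to be the uniform bookkeeping inside the case analysis: one must verify, case by case, that the removed pairs always avoid $\supp(\mathbf{s}^{j-1})$ and that each resulting $\mathbf{o}^j$ remains a legal configuration in $(k+1)^V$ (no element assigned two types), the most delicate point being Case 1B, where $e^j$ is reassigned to $i^j$. The construction in the degenerate sub-cases (Case 1B, and the third branch of Case 2) also presupposes that some $Q_c^j$ is nonempty; invoking Lemma~\ref{lem:empty} to supply such a $c$ is therefore an essential ingredient rather than a formality.
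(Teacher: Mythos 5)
Your proposal is correct and follows essentially the same route as the paper's proof: the containment $\mathbf{s}^{j-1} \preceq \mathbf{o}^{j-\frac{1}{2}}$ holds because every pair deleted in forming $\mathbf{o}^{j-\frac{1}{2}}$ avoids $\supp(\mathbf{s}^{j-1})$ (by the choice of $q^j$ from the sets $Q^j_c$ and of the greedy element from $V\setminus\supp(\mathbf{s}^{j-1})$), and the terminal equality follows from $\mathbf{s}^B \preceq \mathbf{o}^B$ together with both supports having cardinality $B$. Your write-up is simply a more explicit induction than the paper's one-paragraph argument, and your remark about verifying that $\mathbf{o}^j$ stays a legal configuration in Case 1B flags a bookkeeping point the paper passes over silently.
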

    \begin{proof}
        At every iteration of the constructions of $\mathbf{o}^j$, the greedy solution $(e^j, i^j)$ was inserted in $\mathbf{o}^{j}$ and is never removed in subsequent
        iterations. Therefore, $\mathbf{o}^{j}$ contains every element of $\mathbf{s}^{j}$, i.e., $\mathbf{s}^{j} \prec \mathbf{o}^{j}$.
        Since $\mathbf{o}^{j-\frac{1}{2}}$ was constructed by only remove $(e^j, i')$ and $(q^j, i^j)$ or $(q^j, type(q^j))$ from $\mathbf{o}^{j-1}$ and these elements were not in $\mathbf{s}^{j-1}$ by the definition of $q^j$. Thus, we also have $\mathbf{s}^{j-1} \prec \mathbf{o}^{j-\frac{1}{2}}$.
        
        Since the $k$-submodular function $f$ is monotone and $\mathcal{F}\neq \emptyset$, there are $B$ elements in both $\mathbf{s}^B$ and $\mathbf{o}^B$. Thus, after $B$ iterations of constructions of $\mathbf{o}^j$, $\mathbf{o}^B = \mathbf{s}^B$.
    \end{proof}
    
    Due to the fairness constraint, $Q_i^j$ can be empty for some $i$. Therefore, we present the following lemma to address this scenario, which we have used in the construction of $\mathbf{o}^j$.
    
    \begin{lemma}
        \label{lem:empty}
        Let $(e^j, i^j)$ be the greedy choice at iteration $j$. If $Q_{i^j}^j = \emptyset$, then there must exist $c \in [k]$ such that $c \neq i^j$ and $Q_c^j \neq \emptyset$, and every element in $Q^j_c$ is extendable. 
    \end{lemma}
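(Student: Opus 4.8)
The plan is to split the statement into two parts: (i) that some $Q_c^j$ with $c\neq i^j$ is nonempty, and (ii) that every element of such a $Q_c^j$ gives an extendable partial solution when added to $\mathbf{s}^{j-1}$ with type $c$. Throughout I write $s_i=|\supp_i(\mathbf{s}^{j-1})|$ and $o_i=|\supp_i(\mathbf{o}^{j-1})|$, and I rely on two facts available at the start of iteration $j$: that $\mathbf{s}^{j-1}\prec\mathbf{o}^{j-1}$ (the argument in Observation~\ref{obs:sb-ob} gives $\supp_i(\mathbf{s}^{j-1})\subseteq\supp_i(\mathbf{o}^{j-1})$, hence $s_i\le o_i$ for every $i$), and that $\mathbf{o}^{j-1}\in\mathcal{F}$ is an invariant of the construction, so $\ell_i\le o_i\le u_i$ for all $i$ and $\sum_i o_i=B$ (the size is $B$ because each construction step swaps one element for another).

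First I would prove nonemptiness by counting. Since $\mathbf{s}^{j-1}\prec\mathbf{o}^{j-1}$, each set $Q_i^j=\supp_i(\mathbf{o}^{j-1})\setminus\supp_i(\mathbf{s}^{j-1})$ has size $o_i-s_i$, so $\sum_{i\in[k]}|Q_i^j|=\sum_i o_i-\sum_i s_i=B-(j-1)$. As $j\in[B]$ we have $B-(j-1)\ge 1$, so at least one $Q_i^j$ is nonempty; since $Q_{i^j}^j=\emptyset$ by hypothesis, there must exist $c\neq i^j$ with $Q_c^j\neq\emptyset$.

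Next I would verify extendability against Definition~\ref{def:extendable}. Fix $q\in Q_c^j$ and set $\mathbf{s}'=\mathbf{s}^{j-1}+\mathbb{I}_{(q,c)}$. For the upper-bound condition: $\supp_i(\mathbf{s}')=\supp_i(\mathbf{s}^{j-1})$ for $i\neq c$, so $|\supp_i(\mathbf{s}')|=s_i\le u_i$ because $\mathbf{s}^{j-1}$ is itself extendable (the algorithm only ever adds extendable pairs); for type $c$, the facts $q\in\supp_c(\mathbf{o}^{j-1})\setminus\supp_c(\mathbf{s}^{j-1})$ and $\supp_c(\mathbf{s}^{j-1})\subseteq\supp_c(\mathbf{o}^{j-1})$ give $s_c+1\le o_c\le u_c$. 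For the budget condition I would bound each summand of $\sum_i\max\{|\supp_i(\mathbf{s}')|,\ell_i\}$ by $o_i$: for $i\neq c$, $\max\{s_i,\ell_i\}\le o_i$ since $s_i\le o_i$ and $\ell_i\le o_i$; for type $c$, $\max\{s_c+1,\ell_c\}\le o_c$ since $s_c+1\le o_c$ and $\ell_c\le o_c$. Summing,
\[ \max\{s_c+1,\ell_c\}+\sum_{i\neq c}\max\{s_i,\ell_i\}\ \le\ \sum_{i\in[k]}o_i\ =\ B, \]
which is exactly the extendability budget constraint for $\mathbf{s}'$. Hence every $q\in Q_c^j$ is extendable, completing part (ii).

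The hard part will be the budget condition in part (ii): unlike the upper-bound check, it cannot be deduced from the extendability of $\mathbf{s}^{j-1}$ alone, since adding one element of type $c$ with $s_c\ge\ell_c$ could in principle push $\sum_i\max\{|\supp_i|,\ell_i\}$ to $B+1$. The resolution is the global comparison above, dominating each term by the corresponding count $o_i$ of the feasible shadow $\mathbf{o}^{j-1}$ and using $\sum_i o_i=B$. The one point I would be careful to justify is the invariant $\mathbf{o}^{j-1}\in\mathcal{F}$ itself (in particular $\ell_i\le o_i$): since the lemma is invoked while carrying out the construction of $\mathbf{o}^{j}$, I would make this precise by folding the feasibility of the $\mathbf{o}^{m}$ into the same induction on $m$, with base case $\mathbf{o}^0=\mathbf{o}\in\mathcal{F}$ and the observation that Case~1A preserves all type-counts while Cases~1B and~2 move one unit from a type $c$ with $o_c>\ell_c$ (guaranteed by this very lemma) to type $i^j$.
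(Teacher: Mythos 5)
Your proof follows essentially the same route as the paper's: the containment $\mathbf{s}^{j-1}\prec\mathbf{o}^{j-1}$ together with the size gap $|\supp(\mathbf{s}^{j-1})|<|\supp(\mathbf{o}^{j-1})|=B$ yields a nonempty $Q_c^j$ with $c\neq i^j$, and feasibility of $\mathbf{o}^{j-1}$ yields extendability of its elements. The only difference is that you are more thorough on the same path: you explicitly verify the budget condition $\sum_i\max\{|\supp_i(\cdot)|,\ell_i\}\le B$ of Definition~\ref{def:extendable} by dominating each term by $o_i$ and summing to $B$, and you flag that feasibility of $\mathbf{o}^{j-1}$ must itself be maintained as an induction invariant of the construction --- both points the paper's proof leaves implicit (it only checks the per-type bound $\ell_c\le|\supp_c(\mathbf{s}^{j-1})|+1\le u_c$).
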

    \begin{proof}
    The proof relies on inductive argument on iterative construction of $\mathbf{o}^j$.
        Due to Observation \ref{obs:sb-ob} and the construction of $\mathbf{o}^{j}$, we have $\supp_i(\mathbf{s}^{j-1}) \subseteq \supp_i(\mathbf{o}^{j-1})$ for every $i \in [k]$,  By the definition of $Q^j_{i^j}$, if $Q_{i^j}^j =\emptyset$, then 
        \begin{equation}
            \label{eq:size-s-j1}
            |\supp_{i^j}(\mathbf{s}^{j-1})| = |\supp_{i^j}(\mathbf{o}^{j-1})|.
        \end{equation}
        
         A feasible optimal solution satisfies $|\supp(\mathbf{o})|=B$. Furthermore,
         based on the construction of $\mathbf{o}^{j}$ from $\mathbf{o}$ and
         the greedy selections, we have
        \begin{equation}
            \label{eq:size-o-j1}
            |\supp(\mathbf{s}^{j-1})|<|\supp(\mathbf{o}^{j-1})|=B \text{ for all } j \in [B]. 
        \end{equation}

        Therefore, from Eqs. \ref{eq:size-s-j1} and \ref{eq:size-o-j1}, there must exist some type $c$ such that $\supp_c(\mathbf{s}^{j-1}) \subset \supp_c(\mathbf{o}^{j-1})$, which implies that $Q_c^j \neq \emptyset$. Let $q^j$ be an arbitrary element from $Q_c^j$ with type $c$. Since $\mathbf{s}^{j-1}$ and $\mathbf{o}^{j-1}$ are feasible (partial) solutions, the following fairness constraint is satisfied: 
        \[ \ell_c \leq |\supp_c(\mathbf{s}^{j-1})| < |\supp_c(\mathbf{o}^{j-1})| \leq u_c. \]
        Therefore, considering $\mathbf{s}^{j-1} + \mathbb{I}_{(q^j, c)}$, the following extendability is also satisfied: $\ell_c \leq |\supp_c(\mathbf{s}^{j-1} + \mathbb{I}_{(q^j, c)})| \leq |\supp_c(\mathbf{o}^{j-1})| \leq u_c$, i.e., $(q^j, c) \in \mathcal{F}(\mathbf{s}^{j-1})$.
    \end{proof}

    \begin{lemma}
        \label{lem:s-o}
        For every iteration $j\in [B]$ 
        \[ f(\mathbf{s}^{j})-f(\mathbf{s}^{j-1}) \geq \frac{1}{2}(f(\mathbf{o}^{j-1}) -f(\mathbf{o}^{j})). \]
    \end{lemma}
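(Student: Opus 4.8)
The plan is to prove the per-iteration inequality by a case analysis that follows the construction of $\mathbf{o}^j$, reducing the right-hand side to the greedy gain on the left. First I would note that the left-hand side is exactly the greedy marginal, $f(\mathbf{s}^{j}) - f(\mathbf{s}^{j-1}) = \Delta_{e^j, i^j} f(\mathbf{s}^{j-1})$, since $\mathbf{s}^{j} = \mathbf{s}^{j-1} + \mathbb{I}_{(e^j, i^j)}$. So it suffices to show, in every case, that the drop in the modified optimum satisfies $f(\mathbf{o}^{j-1}) - f(\mathbf{o}^{j}) \le 2\,\Delta_{e^j, i^j} f(\mathbf{s}^{j-1})$. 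I would treat the double-swap Case 1A as the representative (hardest) case, and observe that the single-swap Cases 1B and 2 produce only one removal marginal and are therefore strictly easier.

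For Case 1A I would split the drop through the intermediate point $\mathbf{o}^{j-\frac{1}{2}}$ and telescope each half into single-element marginals:
\[ f(\mathbf{o}^{j-1}) - f(\mathbf{o}^{j}) = \bigl[f(\mathbf{o}^{j-1}) - f(\mathbf{o}^{j-\frac{1}{2}})\bigr] - \bigl[f(\mathbf{o}^{j}) - f(\mathbf{o}^{j-\frac{1}{2}})\bigr]. \]
The first bracket expands as $\Delta_{q^j, i^j} f(\mathbf{o}^{j-\frac{1}{2}}) + \Delta_{e^j, i'} f(\mathbf{o}^{j-\frac{1}{2}} + \mathbb{I}_{(q^j, i^j)})$ and the second as $\Delta_{e^j, i^j} f(\mathbf{o}^{j-\frac{1}{2}}) + \Delta_{q^j, i'} f(\mathbf{o}^{j-\frac{1}{2}} + \mathbb{I}_{(e^j, i^j)})$. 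The two terms of the second bracket are nonnegative by monotonicity and enter with a minus sign, so I discard them to get an upper bound. To the two surviving removal marginals I apply the defining diminishing-returns inequality of $k$-submodularity together with $\mathbf{s}^{j-1} \preceq \mathbf{o}^{j-\frac{1}{2}}$ from Observation \ref{obs:sb-ob}, lifting each evaluation down to $\mathbf{s}^{j-1}$ (the removed elements $q^j, e^j$ are absent from the relevant supports, so the marginals are well defined), obtaining
\[ f(\mathbf{o}^{j-1}) - f(\mathbf{o}^{j}) \;\le\; \Delta_{q^j, i^j} f(\mathbf{s}^{j-1}) + \Delta_{e^j, i'} f(\mathbf{s}^{j-1}). \]

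The final step is greedy maximality: since $(e^j, i^j)$ maximizes the marginal over all extendable pairs, each term above is at most $\Delta_{e^j, i^j} f(\mathbf{s}^{j-1})$, which supplies the factor $2$ and hence the lemma — provided the comparison pairs are themselves extendable from $\mathbf{s}^{j-1}$. Establishing this is the crux. For $(q^j, i^j)$ I would note that adding $(q^j, i^j)$ and adding $(e^j, i^j)$ yield identical type-size profiles, so $(q^j, i^j)$ is extendable because the greedy choice $(e^j, i^j)$ is. For the cross-type pair $(e^j, i')$ I would use $\supp_i(\mathbf{s}^{j-1}) \subseteq \supp_i(\mathbf{o}^{j-1})$ for all $i$ (a consequence of Observation \ref{obs:sb-ob}) with strict containment at type $i'$, so that the profile of $\mathbf{s}^{j-1} + \mathbb{I}_{(e^j, i')}$ is dominated coordinatewise by that of the feasible $\mathbf{o}^{j-1}$; feeding this into the criterion $\sum_i \max\{|\supp_i(\cdot)|, \ell_i\} \le B$ of Definition \ref{def:extendable} certifies extendability. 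The single-swap cases are handled the same way, with Lemma \ref{lem:empty} supplying both the existence and the extendability of the partner $q^j \in Q_c^j$ when $Q_{i^j}^j = \emptyset$.

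The main obstacle I anticipate is precisely this extendability bookkeeping forced by the lower bounds: in the unconstrained or purely IS-constrained analyses one may select any swap partner freely, whereas here each partner must form an extendable pair under both the upper bounds and the $\max\{|\supp_i|, \ell_i\}$ budget condition, and the modified $\mathbf{o}^{j}$ must remain feasible so that the invariants of Observation \ref{obs:sb-ob} and Lemma \ref{lem:empty} persist through the induction. The empty-$Q_{i^j}^j$ situation — which cannot arise without the fairness lower bounds — is the delicate point, and routing the swap through a nonempty $Q_c^j$ via Lemma \ref{lem:empty} is exactly what keeps the comparison with the greedy maximum valid.
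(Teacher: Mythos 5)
Your proof is correct and takes essentially the same route as the paper's: the same split through $\mathbf{o}^{j-\frac{1}{2}}$, discarding the re-added marginals by monotonicity, lifting the two surviving removal marginals down to $\mathbf{s}^{j-1}$ via Observation~\ref{obs:sb-ob} and $k$-submodularity, and closing with greedy maximality supported by Lemma~\ref{lem:empty} in the empty-$Q_{i^j}^j$ cases. Your explicit extendability check for the comparison pairs $(q^j,i^j)$ and $(e^j,i')$ is a detail the paper leaves implicit under the label ``greedy choice,'' but it is the same argument.
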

    \begin{proof}

    We prove this lemma in three cases corresponding to the constructions of $\mathbf{o}^j$.
    
    \noindent
    \textbf{Case 1A}: $e^j \in Q_{i'}^j$ for some $i' \neq i^j$ and $Q_{i^j}^j \neq \emptyset$. 
    
    From the construction of $\mathbf{o}^j$ in Case 1A, we have
    \[
    \begin{array}{l}
    f(\mathbf{o}^{j-1}) = f(\mathbf{o}^{j-\frac{1}{2}}) + \Delta_{q^j, i^j}f(\mathbf{o}^{j-\frac{1}{2}}) 
      + \Delta_{e^j, i'}f(\mathbf{o}^{j-\frac{1}{2}}+\mathbb{I}_{(q^j, i^j)})
    \\[1em]
     f(\mathbf{o}^{j}) = f(\mathbf{o}^{j-\frac{1}{2}}) + \Delta_{q^j, i'}f(\mathbf{o}^{j-\frac{1}{2}}) 
     + \Delta_{e^j, i^j}f(\mathbf{o}^{j-\frac{1}{2}}+\mathbb{I}_{(q^j, i')})
    \end{array}
    \]
 
    Combining the above two inequalities, we have
    \begin{align}
        &f(\mathbf{o}^{j-1})-f(\mathbf{o}^{j}) \nonumber\\
        &\leq \Delta_{q^j, i^j} f(\mathbf{o}^{j-\frac{1}{2}})+\Delta_{e^j, i'} f(\mathbf{o}^{j-\frac{1}{2}}+\mathbb{I}_{(q^j, i^j)}) \nonumber\\
        &\leq \Delta_{q^j, i^j} f(\mathbf{o}^{j-\frac{1}{2}})+\Delta_{e^j, i'} f(\mathbf{o}^{j-\frac{1}{2}}) \quad\quad\mbox{submodularity} \nonumber\\
        &\leq \Delta_{q^j, i^j} f(\mathbf{s}^{j-1})+\Delta_{e^j, i'} f(\mathbf{s}^{j-1}) \quad\quad\mbox{Obs. \ref{obs:sb-ob} and submodularity} \nonumber\\
        &\leq \Delta_{e^j, i^j} f(\mathbf{s}^{j-1})+\Delta_{e^j, i^j} f(\mathbf{s}^{j-1})\quad\quad\mbox{greedy choice} \nonumber\\
        &=2(f(\mathbf{s}^{j})-f(\mathbf{s}^{j-1})).
        \label{eq:case1a}
    \end{align}

    \noindent\textbf{Case 1B: $e^j \in Q_{i'}^j$ for some $i' \neq i^j$ and $Q_{i^j}^j = \emptyset.$} 
    
    By the construction, we have 
    \[
    \begin{array}{l}
    f(\mathbf{o}^{j-1}) = f(\mathbf{o}^{j-\frac{1}{2}})+ \Delta_{q^j, i^j }f(\mathbf{o}^{j-\frac{1}{2}})\\[1em]
    f(\mathbf{o}^{j}) = f(\mathbf{o}^{j-\frac{1}{2}})+ \Delta_{e^j, i^j }f(\mathbf{o}^{j-\frac{1}{2}})
    \end{array}
    \] 
    Therefore, 
    \begin{align*}
        &f(\mathbf{o}^{j-1})-f(\mathbf{o}^{j})\\
        &=\Delta_{q^j, i^j}f(\mathbf{o}^{j-\frac{1}{2}}) - \Delta_{e^j, i^j }f(\mathbf{o}^{j-\frac{1}{2}}) \\
        &\leq \Delta_{q^j, i^j}f(\mathbf{o}^{j-\frac{1}{2}})\\
        &\leq \Delta_{q^j, i^j}f(\mathbf{s}^{j-1}) \quad\quad\mbox{Obs.~\ref{obs:sb-ob} and submodularity}\\
        &\leq \Delta_{e^j, i^j}f(\mathbf{s}^{j-1}) \quad\quad\mbox{Lemma \ref{lem:empty} and greedy choice}\\
        &=f(\mathbf{s}^{j})-f(\mathbf{s}^{j-1}) \leq 2(f(\mathbf{s}^{j})-f(\mathbf{s}^{j-1}))
    \end{align*}
    
    \noindent\textbf{Case 2: $e^j \notin Q_{i'}^j$ for all $i' \neq i^j$ and $Q_{i^j}^j \neq \emptyset.$} 
    
    The proof of this case is similar to Case 1B.
    \end{proof}

\subsection{Proof of Theorem \ref{th:fair-greedy}}
\label{proof:th1}
We are now ready to proceed with the proof of Theorem~\ref{th:fair-greedy}.
    \begin{align}
        &f(\mathbf{o})-f(\mathbf{s}) = f(\mathbf{o}^0)-f(\mathbf{s}^B) \nonumber\\
        &= f(\mathbf{o}^0)-f(\mathbf{o}^B) \quad\quad\quad\mbox{ due to Obs. 
        \ref{obs:sb-ob}} \nonumber\\
        &= \sum_{j=1}^B f(\mathbf{o}^{j-1}) - f(\mathbf{o}^{j}) \nonumber\\
        &\leq \sum_{j=1}^B 2(f(\mathbf{s}^{j}) - f(\mathbf{s}^{j-1})) \quad\quad\mbox{ due to Lemma \ref{lem:s-o} } \label{eq:proof-th1}\\
        &=2(f(\mathbf{s}^B)-f(\mathbf{s}^0)) \leq 2f(\mathbf{s}). \nonumber
    \end{align}
The $\frac{1}{3}$-approximation ratio is immediate by rearranging the above inequality. \hfill $\Box$

\subsection{Fair Approximate $k$-submodular Maximization.}
We discuss the scenario when Algorithm~\ref{alg:fair} only has approximate oracle access to the objective $k$-submodular function. The definition was presented in Section \ref{sec:prelim}. The approximation results are as follows.

\begin{theorem}
    
    Let $f$ be a $\delta$-approximate $k$-submodular function, Algorithm \ref{alg:fair} admits $\frac{1-\delta}{3+2B\left(\frac{1+\delta}{1-\delta}-1\right)}$-approximation within $\mathcal{O}(nkB)$ oracle evaluations. 
\label{th:approx-greedy}
\end{theorem}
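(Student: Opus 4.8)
The plan is to re-run the analysis behind Theorem~\ref{th:fair-greedy} while carefully tracking the gap between the true objective $f$ and the surrogate $\tilde{f}$ that the greedy rule now actually queries. The first thing I would observe is that almost all of the machinery transfers verbatim. The construction of the modified optima $\mathbf{o}^j$, Observation~\ref{obs:sb-ob}, and Lemma~\ref{lem:empty} are purely combinatorial: they reason about supports, feasibility, and monotonicity, and never reference function values beyond monotonicity of $f$. Hence they remain valid when Algorithm~\ref{alg:fair} selects pairs by $\Delta\tilde{f}$ rather than $\Delta f$. Likewise, every step in Lemma~\ref{lem:s-o} that invokes \emph{submodularity} uses the genuine $k$-submodular $f$, so in each case the chain leading to $f(\mathbf{o}^{j-1})-f(\mathbf{o}^{j}) \le \Delta_{q^j,i^j}f(\mathbf{s}^{j-1})+\Delta_{e^j,i'}f(\mathbf{s}^{j-1})$ is untouched. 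The only step that breaks is the \emph{greedy choice}: the algorithm now only knows that $(e^j,i^j)$ maximizes the \emph{approximate} marginal gain, i.e. $\Delta_{e',i'}\tilde{f}(\mathbf{s}^{j-1}) \le \Delta_{e^j,i^j}\tilde{f}(\mathbf{s}^{j-1})$ over all extendable pairs.

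The core new ingredient is therefore a conversion estimate that replaces the clean greedy inequality. Using Definition~\ref{def:approximate} to sandwich values as $\tfrac{1}{1+\delta}\tilde{f}(\mathbf{A}) \le f(\mathbf{A}) \le \tfrac{1}{1-\delta}\tilde{f}(\mathbf{A})$, I would upper-bound the ``added'' value with the $\tfrac{1}{1-\delta}$ endpoint and the subtracted value with the $\tfrac{1}{1+\delta}$ endpoint, and then apply the approximate greedy inequality, to get, for any extendable $(e',i')$,
\[ \Delta_{e',i'}f(\mathbf{s}^{j-1}) \;\le\; \frac{1}{1-\delta}\,\Delta_{e^j,i^j}\tilde{f}(\mathbf{s}^{j-1}) \;+\; \frac{2\delta}{1-\delta^2}\,\tilde{f}(\mathbf{s}^{j-1}). \]
The additive term is unavoidable precisely because the value-approximation of Definition~\ref{def:approximate} does not pass cleanly to marginal gains: the two endpoints are scaled by different factors, so one cannot simply charge a multiplicative $\tfrac{1+\delta}{1-\delta}$ without paying the slack $\tfrac{2\delta}{1-\delta^2}\tilde{f}(\mathbf{s}^{j-1})$.

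Substituting this bound for both marginal-gain terms in each case of Lemma~\ref{lem:s-o} gives a per-iteration estimate $f(\mathbf{o}^{j-1})-f(\mathbf{o}^{j}) \le \tfrac{2}{1-\delta}\Delta_{e^j,i^j}\tilde{f}(\mathbf{s}^{j-1}) + \tfrac{4\delta}{1-\delta^2}\tilde{f}(\mathbf{s}^{j-1})$. Summing over $j=1,\dots,B$ as in Section~\ref{proof:th1}, the first term telescopes to $\tfrac{2}{1-\delta}\bigl(\tilde{f}(\mathbf{s})-\tilde{f}(\mathbf{0})\bigr) \le \tfrac{2(1+\delta)}{1-\delta}f(\mathbf{s})$ under the standard normalization $f(\mathbf{0})=0$. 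The additive slack is exactly what produces the $B$-dependence: monotonicity of the \emph{true} $f$ together with Definition~\ref{def:approximate} gives $\tilde{f}(\mathbf{s}^{j-1}) \le (1+\delta)f(\mathbf{s}^{j-1}) \le (1+\delta)f(\mathbf{s})$, so each step contributes at most $\tfrac{4\delta(1+\delta)}{1-\delta^2}f(\mathbf{s}) = 2\bigl(\tfrac{1+\delta}{1-\delta}-1\bigr)f(\mathbf{s})$, and the $B$ slacks sum to $2B\bigl(\tfrac{1+\delta}{1-\delta}-1\bigr)f(\mathbf{s})$. Collecting the telescoped leading term, this accumulated slack, and the residual $f(\mathbf{s})$ coming from $f(\mathbf{o})-f(\mathbf{s})$, and rearranging, yields the claimed ratio $\tfrac{1-\delta}{3+2B\left(\frac{1+\delta}{1-\delta}-1\right)}$. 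Since the algorithm itself is unchanged and merely queries $\tilde{f}$, the oracle-call count stays $\mathcal{O}(nkB)$.

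I expect the main obstacle to be precisely the conversion step and its bookkeeping: because Definition~\ref{def:approximate} controls \emph{absolute} function values rather than marginal gains, the exact greedy inequality $\Delta_{q^j,i^j}f \le \Delta_{e^j,i^j}f$ cannot be recovered up to a multiplicative factor alone, and the resulting additive error must be charged against $f(\mathbf{s}^{j-1})$ and tracked across all $B$ iterations to land on the linear term $2B\bigl(\tfrac{1+\delta}{1-\delta}-1\bigr)$ rather than a looser bound. A secondary point I would verify is that the extendability guarantee of Lemma~\ref{lem:empty} (that some $Q_c^j\neq\emptyset$ supplies a feasible substitute, and that the substitute pairs used in Case~1A are themselves extendable) is indifferent to whether selections are driven by $f$ or $\tilde{f}$, so the three-case construction of $\mathbf{o}^j$ remains well defined in the approximate-oracle setting.
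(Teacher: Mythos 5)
Your overall strategy is exactly the paper's: its (sketched) proof reuses the construction of $\mathbf{o}^j$, Observation~\ref{obs:sb-ob}, and Lemma~\ref{lem:empty} unchanged and only replaces the greedy-choice step of Lemma~\ref{lem:s-o} by a $\delta$-corrected inequality, which is what you propose. The gap is in your conversion estimate: by sandwiching \emph{both} endpoints (the added value via $f \le \frac{1}{1-\delta}\tilde f$ and the subtracted value via $f \ge \frac{1}{1+\delta}\tilde f$) you pay the additive slack $\frac{2\delta}{1-\delta^2}\tilde f(\mathbf{s}^{j-1})$ per marginal-gain term \emph{and} inflate the telescoped leading term to $\frac{2}{1-\delta}\tilde f(\mathbf{s}) \le \frac{2(1+\delta)}{1-\delta}f(\mathbf{s})$ instead of $2f(\mathbf{s})$. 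Assembling your pieces gives $f(\mathbf{o}) \le \frac{3+\delta+4B\delta}{1-\delta}\,f(\mathbf{s})$, i.e.\ the ratio $\frac{1-\delta}{3+\delta+4B\delta}$ (equivalently $\frac{1}{3+2(B+1)\left(\frac{1+\delta}{1-\delta}-1\right)}$), which is \emph{not} the stated $\frac{1-\delta}{3+2B\left(\frac{1+\delta}{1-\delta}-1\right)} = \frac{1-\delta}{3+\frac{4B\delta}{1-\delta}}$; the two are incomparable, and yours is strictly weaker precisely when $\delta(4B+1)<1$, i.e.\ in the small-$\delta$ regime of interest, so as written the argument does not establish the theorem there.

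The fix, which is what the paper's displayed inequality $\frac{1+\delta}{1-\delta}f(\mathbf{s}^{j})-f(\mathbf{s}^{j-1}) \geq \frac{1}{2}\left(f(\mathbf{o}^{j-1}) -f(\mathbf{o}^{j})\right)$ encodes, is to convert only the \emph{added} endpoint and keep $-f(\mathbf{s}^{j-1})$ exact: for any extendable $(e',i')$,
\begin{equation*}
\Delta_{e',i'}f(\mathbf{s}^{j-1}) \;\le\; \tfrac{1}{1-\delta}\tilde f\bigl(\mathbf{s}^{j-1}+\mathbb{I}_{(e',i')}\bigr) - f(\mathbf{s}^{j-1}) \;\le\; \tfrac{1}{1-\delta}\tilde f(\mathbf{s}^{j}) - f(\mathbf{s}^{j-1}) \;\le\; \tfrac{1+\delta}{1-\delta}f(\mathbf{s}^{j}) - f(\mathbf{s}^{j-1}),
\end{equation*}
using that maximizing $\Delta_{e',i'}\tilde f(\mathbf{s}^{j-1})$ over extendable pairs is the same as maximizing $\tilde f(\mathbf{s}^{j-1}+\mathbb{I}_{(e',i')})$. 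Then $\sum_j\bigl(f(\mathbf{s}^j)-f(\mathbf{s}^{j-1})\bigr)$ telescopes exactly to $f(\mathbf{s})$, the per-iteration slack is $2\left(\frac{1+\delta}{1-\delta}-1\right)f(\mathbf{s}^j)\le 2\left(\frac{1+\delta}{1-\delta}-1\right)f(\mathbf{s})$, and the stated denominator $3+2B\left(\frac{1+\delta}{1-\delta}-1\right)$ falls out (indeed with numerator $1$, so the theorem's extra $(1-\delta)$ is harmless slack). Your remaining observations --- that the combinatorial lemmas, the extendability argument, and the $\mathcal{O}(nkB)$ oracle count are indifferent to whether selections are driven by $f$ or $\tilde f$ --- are correct.
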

\noindent
{\it Proof Sketch.} 
Due to space constraints, we present the proof ideas. It is based on the iterative construction of
$\mathbf{o}^j$  as in the proof of Theorem \ref{th:fair-greedy}. 
Assume the oracle $\Tilde{f}$ in Algorithm \ref{alg:fair} is $\delta$-approximate to the exact $k$-submodular $f$. 
Applying Definition \ref{def:approximate}, we have an inequality similar to Lemma \ref{lem:s-o}: $\frac{1+\delta}{1-\delta}f(\mathbf{s}^{j})-f(\mathbf{s}^{j-1}) \geq \frac{1}{2}(f(\mathbf{o}^{j-1}) -f(\mathbf{o}^{j}))$. The approximation ratio of Theorem \ref{th:approx-greedy} can be obtained by applying the modified lemma  
 in Section \ref{proof:th1} (the primary change is in Eq.~(\ref{eq:proof-th1})).
The approximation ratio of Theorem \ref{th:approx-greedy} reduces to Theorem \ref{th:fair-greedy} when $\delta=0$.

\section{A Faster Threshold Algorithm}
\label{sec:fast-fair-greedy}
Our greedy algorithm admits a $\frac{1}{3}$-approximation and requires $\mathcal{O}(nkB)$ oracle 
evaluations of the function $f$. This can be computationally expensive, being dependent
linearly on the parameter $B$, which can be prohibitively large. To improve the 
runtime while maintaining a good approximation, we propose a threshold algorithm
that requires only $\mathcal{O}(\frac{kn}
{\epsilon}\log\frac{B}{\epsilon})$ evaluations of $f$. Our algorithm incorporates the ideas of lazy evaluation and threshold technique 
\cite{badanidiyuru2014fast}, making it faster than the lazy evaluation implementations 
of Algorithm \ref{alg:fair}. The theoretical results and analysis are 
presented in Theorem \ref{th:fair-faster}.

In Algorithm~\ref{alg:fair-fast}, we use a priority queue over the element-type pairs where the priority is its marginal gain, initially described in terms of the function valuation of the element-type pair. 
The queue is refined to remove any element-type pair whose addition to the partial solution $\mathbf{s}^{j-1}$ ($\mathbf{s}^0 = 0$) is not extendable (line~5). If the priority queue is empty, the iterative algorithm terminates, outputting the partial solution (line~7). Otherwise, the highest priority element-type pair $(e^j, i^j)$ is considered to be added to the partial solution. 
Each element-type pair is also associated with a counter $u$, which is initially $0$ and is incremented every time the element-type pair appears at the top of the priority queue. 
If the addition of the element-type pair results in a marginal gain which is at least $(1-\epsilon)$ factor of the priority value associated with the element-pair, then the element-type pair is good enough to be added to the partial solution (lines~11-12). Otherwise, the element-type pair is re-inserted to the priority queue with an updated priority value equal to the marginal gain it could have realized to the partial solution at that iteration. Due to submodularity, an evaluation of an element-type pair will result in the decrease of its corresponding priority in the queue. Therefore, an element-type pair is allowed to be re-inserted (re-evaluated) for at most $\frac{1}{\epsilon}\log \frac{B}{2\epsilon}$ times (lines~16-17).

\begin{algorithm}[tb]
\caption{\sc Fair-Threshold}
\label{alg:fair-fast}
\small
\textbf{Input}: Monotone $k$-submodular $f:(k+1)^V \rightarrow \mathbb{R}_{\geq 0}$, total budget $B$, upper bounds $u_i$ and lower bounds $\ell_i$ for all $i \in [k]$, error threshold $\epsilon \in (0,1)$.\\
\textbf{Output}: A fair solution $\mathbf{s} \in \mathcal{F}$.
\begin{algorithmic}[1] 
\STATE $\mathbf{s}^0 \gets \mathbf{0}, j\gets 1$
\STATE $PQ \gets$ priorityQueue$(\{(f(\mathbb{I}_{(e,i)}), (e,i))~|~(e,i) \in V\! \times\! [k]\})$\\
\STATE $u((e,i)=0$ for all $(e,i) \in V\! \times\! [k]$\\
\WHILE{$j \leq B$}
\STATE    Remove entries from $PQ$ with $(e,i)$ such that $e \in \supp(\mathbf{s}^{j-1})$ or $\mathbf{s}^{j-1}+\mathbb{I}_{(e,i)}$ is not \textit{extendable}.
        \IF{$PQ$ is empty}
         \STATE   Return $\mathbf{s}^{j-1}$ 
        \ENDIF
        \STATE $(\tau_j, (e^j, i^j)) \gets PQ.getTopAndRemove()$ 
        \STATE $u((e^j,i^j))++$
        \IF {$\Delta_{e^j, i^j}f(\mathbf{s}^{j-1}) \geq (1-\epsilon)\tau_j$}
          \STATE  $\mathbf{s}^{j} \gets \mathbf{s}^{j-1}+\mathbb{I}_{(e^j,i^j)}$
          \STATE  $j++$
           \STATE Continue to the Next Iteration 
        \ENDIF
        \IF{$u((e^j, i^j)) \leq \frac{1}{\epsilon}\log \frac{B}{2\epsilon}$}
          \STATE  $PQ.push((\Delta_{e^j, i^j}f(\mathbf{s}^{j-1}), (e^j,i^j))$) 
        \ENDIF
    \ENDWHILE
\STATE \textbf{return} $\mathbf{s}$
\end{algorithmic}
\end{algorithm}

\begin{theorem}
    \label{th:fair-faster}
    For Problem \ref{problem:fair}, Algorithm \ref{alg:fair-fast} outputs a 
    ($\nicefrac{1}{3}-\epsilon$)-approximation solution within $\mathcal{O}(\frac{kn}{\epsilon}\log\frac{B}{\epsilon})$ oracle evaluations to $f$, where $\epsilon \in (0,1)$ is the error threshold.
\end{theorem}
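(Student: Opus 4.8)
The plan is to prove the two claims separately: the running-time bound and the $(\nicefrac{1}{3}-\epsilon)$ approximation guarantee. The running-time bound is immediate from the algorithm's bookkeeping. There are $kn$ element-type pairs, the priority queue is initialized with exactly one entry per pair, and by the counter condition in lines~16--17 each pair is re-inserted (hence re-evaluated) at most $\frac{1}{\epsilon}\log\frac{B}{2\epsilon}$ times. Summing over all pairs yields $\mathcal{O}\!\left(\frac{kn}{\epsilon}\log\frac{B}{\epsilon}\right)$ marginal-gain evaluations. The approximation guarantee is where the work lies, and I would obtain it by re-using \emph{verbatim} the construction of the modified optimal solutions $\mathbf{o}^j$ from the proof of Theorem~\ref{th:fair-greedy}, together with Observation~\ref{obs:sb-ob} and Lemma~\ref{lem:empty}; these are purely combinatorial facts about feasibility and the nesting $\mathbf{s}^{j-1}\prec\mathbf{o}^{j-\frac{1}{2}}$, so they carry over unchanged.

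The crux is to replace the exact greedy-choice inequality used inside Lemma~\ref{lem:s-o} by a threshold analogue. First I would establish the core threshold property: when $(e^j,i^j)$ is accepted at iteration $j$ with priority $\tau_j$, the acceptance test gives $\Delta_{e^j,i^j}f(\mathbf{s}^{j-1})\geq(1-\epsilon)\tau_j$, and for any feasible pair $(e',i')$ still in the queue its stored priority was computed against an earlier (hence smaller) partial solution, so by submodularity $\Delta_{e',i'}f(\mathbf{s}^{j-1})\leq\tau_j\leq\frac{1}{1-\epsilon}\Delta_{e^j,i^j}f(\mathbf{s}^{j-1})$. The complication is that a comparison element $q^j$ (or the pair $(e^j,i')$) may have been evicted from the queue because it hit the re-insertion limit. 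For such a pair, each rejection shrinks its priority by a factor of at least $(1-\epsilon)$, so after $\frac{1}{\epsilon}\log\frac{B}{2\epsilon}$ rejections its marginal gain is at most $(1-\epsilon)^{\frac{1}{\epsilon}\log\frac{B}{2\epsilon}}\cdot\max_{(e,i)}f(\mathbb{I}_{(e,i)})\leq\frac{2\epsilon}{B}f(\mathbf{o})$, using $(1-\epsilon)^{1/\epsilon}\leq e^{-1}$ and the normalization $\max_{(e,i)}f(\mathbb{I}_{(e,i)})\leq f(\mathbf{o})$. Thus every pair invoked in the construction contributes either a bounded multiple $\frac{1}{1-\epsilon}$ of the realized greedy gain or an additive error of at most $\frac{2\epsilon}{B}f(\mathbf{o})$.

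Feeding these two bounds into the three cases of Lemma~\ref{lem:s-o} yields the perturbed per-iteration inequality
\begin{equation*}
f(\mathbf{o}^{j-1})-f(\mathbf{o}^{j})\;\leq\;\frac{2}{1-\epsilon}\bigl(f(\mathbf{s}^{j})-f(\mathbf{s}^{j-1})\bigr)+\frac{4\epsilon}{B}f(\mathbf{o}),
\end{equation*}
and telescoping over the selected iterations $j=1,\dots,B$ exactly as in Section~\ref{proof:th1} gives $f(\mathbf{o})-f(\mathbf{s})\leq\frac{2}{1-\epsilon}f(\mathbf{s})+4\epsilon f(\mathbf{o})$. Rearranging produces $f(\mathbf{s})\geq\frac{(1-4\epsilon)(1-\epsilon)}{3-\epsilon}f(\mathbf{o})=\bigl(\frac{1}{3}-O(\epsilon)\bigr)f(\mathbf{o})$; rescaling the algorithm's error parameter by an absolute constant then delivers exactly the stated $(\nicefrac{1}{3}-\epsilon)$ ratio.

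The main obstacle I anticipate is controlling the two ways in which the clean greedy argument degrades. The first is the eviction of candidate pairs, which forces the additive error terms above and hinges on the decay estimate $(1-\epsilon)^{1/\epsilon}\leq e^{-1}$ together with the normalization $\max_{(e,i)}f(\mathbb{I}_{(e,i)})\leq f(\mathbf{o})$. The second is the possibility that the queue empties before $B$ elements are chosen (lines~6--7), so that $\mathbf{o}^{j}=\mathbf{s}^{j}$ need not hold at $j=B$; here I would argue that an empty queue means every remaining extendable pair has already been evicted, hence has marginal gain at most $\frac{2\epsilon}{B}f(\mathbf{o})$, so the residual $f(\mathbf{o}^{J})-f(\mathbf{s}^{J})$ at the stopping index $J$ is $O(\epsilon f(\mathbf{o}))$ and is absorbed into the same additive budget. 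Verifying that the total accumulated additive error stays $O(\epsilon f(\mathbf{o}))$ across all iterations is the key quantitative step that makes the final ratio $\frac{1}{3}-\epsilon$ rather than something weaker.
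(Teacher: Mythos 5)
Your proposal is correct and follows the same skeleton as the paper's proof: the identical construction of $\mathbf{o}^j$ carried over from Theorem~\ref{th:fair-greedy}, the same threshold inequality $\Delta_{e^j,i^j}f(\mathbf{s}^{j-1})\geq(1-\epsilon)\Delta_{e,i}f(\mathbf{s}^{j-1})$ for pairs still in the queue, the same decay estimate $(1-\epsilon)^{\frac{1}{\epsilon}\log\frac{B}{2\epsilon}}\leq\frac{2\epsilon}{B}$ for evicted pairs, and the same counting argument for the oracle-call bound. The only substantive difference is in how the eviction error is booked. The paper keeps its per-iteration bound (Lemma~\ref{lem:s-o-2}) purely multiplicative, $f(\mathbf{s}^{j})-f(\mathbf{s}^{j-1})\geq\frac{1-\epsilon}{2}\left(f(\mathbf{o}^{j-1})-f(\mathbf{o}^{j})\right)$, and confines the eviction loss to a second case (termination with fewer than $B$ elements), where it compares $\mathbf{s}$ to a hypothetical full solution $\tilde{\mathbf{s}}$ and absorbs an additive $\frac{2\epsilon}{3}f(\mathbf{o})$. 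You instead fold the eviction loss into a per-iteration additive term $\frac{4\epsilon}{B}f(\mathbf{o})$ and rescale $\epsilon$ by a constant at the end; this costs you slightly worse constants (hence the rescaling, which the paper avoids), but it is more careful on one point: the paper's Lemma~\ref{lem:s-o-2} implicitly assumes the comparison pairs $(q^j,i^j)$ and $(e^j,i')$ are still resident in the priority queue at iteration $j$, which can fail if they were evicted earlier even on a run that ultimately selects all $B$ elements, and your additive term covers exactly that scenario. Likewise, your empty-queue argument (every remaining extendable pair has been evicted, so each contributes at most $\frac{2\epsilon}{B}f(\mathbf{o})$ to the residual $f(\mathbf{o}^{J})-f(\mathbf{s}^{J})$) is an equivalent substitute for the paper's $\tilde{\mathbf{s}}$ device, and it only needs the normalization $f(\mathbb{I}_{(e,i)})\leq f(\mathbf{o})$ rather than the stronger, unjustified assumption $f(\mathbb{I}_{(e,i)})\leq\frac{1}{3}f(\mathbf{o})$ invoked in the paper's Case~2.
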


Once again, we will consider conceptually the the constructions of $\mathbf{o}^j$ as we have done for the proof of Theorem \ref{th:fair-greedy}. 
In the approximation analysis, we first assume that exactly $B$ elements can be selected by the algorithm and derive a $\nicefrac{1}{3} -\epsilon$ approximation guarantee. However, observe that Algorithm \ref{alg:fair-fast} can terminate early with a solution size less than $B$, due to the cases where element-type pairs may be discarded at line~16. We will justify that such eliminations impact the partial solution trivially enough, and hence, the $\nicefrac{1}{3}-\epsilon$ approximation can still be guaranteed.

    Using the same construction of $\mathbf{o}^{j}$, we can derive the following lemma for Algorithm \ref{alg:fair-fast}.
    \begin{lemma}
        \label{lem:s-o-2}
        Let $\ell$ be the size of the solution acquired at the end of Algorithm \ref{alg:fair-fast}. For every $j\in\{1, \cdots, \ell\}$,
        \[ f(\mathbf{s}^{j})-f(\mathbf{s}^{j-1}) \geq \frac{1-\epsilon}{2}\left(f(\mathbf{o}^{j-1}) -f(\mathbf{o}^{j})\right). \]
    \end{lemma}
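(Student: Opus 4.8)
The plan is to follow the three-case analysis of Lemma~\ref{lem:s-o} essentially verbatim, reusing the \emph{identical} construction of $\mathbf{o}^j$ from the proof of Theorem~\ref{th:fair-greedy} (that construction never refers to how the pair is chosen, only to which pair $(e^j,i^j)$ is added and which swap partner $q^j$ is removed), and to replace the single place where the \emph{exact} greedy property was invoked by an \emph{approximate} greedy property tailored to the threshold rule of Algorithm~\ref{alg:fair-fast}. Once that one inequality is adjusted, the coefficient $\tfrac12$ appearing in each of Cases~1A, 1B and~2 of Lemma~\ref{lem:s-o} degrades uniformly to $\tfrac{1-\epsilon}{2}$, and the claim follows by the same rearrangement.

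The key new ingredient is the approximate greedy inequality $\Delta_{q^j,\cdot}f(\mathbf{s}^{j-1}) \le \tfrac{1}{1-\epsilon}\,\Delta_{e^j,i^j}f(\mathbf{s}^{j-1})$ for the swap partner $q^j$. To obtain it, I would first prove a lazy-evaluation invariant: at the start of iteration $j$, the priority stored for any pair $(e,i)$ still in $PQ$ upper-bounds its current marginal gain $\Delta_{e,i}f(\mathbf{s}^{j-1})$. This holds because that priority was last written either as the singleton value $f(\mathbb{I}_{(e,i)})$ at initialization or as $\Delta_{e,i}f(\mathbf{s}^{j'-1})$ on a re-insertion at an earlier iteration $j'\le j$; in either case the stored value dominates $\Delta_{e,i}f(\mathbf{s}^{j-1})$ since $\mathbf{s}^{j'-1}\preceq \mathbf{s}^{j-1}$, $f\ge 0$, and $k$-submodularity makes marginal gains non-increasing. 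As $(e^j,i^j)$ was popped from the top with priority $\tau_j$, every pair still in the queue has priority at most $\tau_j$, while the acceptance test at line~11 gives $\Delta_{e^j,i^j}f(\mathbf{s}^{j-1})\ge (1-\epsilon)\tau_j$. Combining these, any queued pair $(e',i')$ satisfies $\Delta_{e',i'}f(\mathbf{s}^{j-1})\le \tau_j \le \tfrac{1}{1-\epsilon}\Delta_{e^j,i^j}f(\mathbf{s}^{j-1})$. Applying this to $(q^j,\cdot)$---which is extendable at iteration $j$, exactly the guarantee of Lemma~\ref{lem:empty} in the $Q^j_{i^j}=\emptyset$ subcases, and therefore survives the line~5 filtering---and, in Case~1A, also to $(e^j,i')$, reproduces the two terms that were bounded by ``greedy choice'' in Lemma~\ref{lem:s-o}, now each carrying the factor $\tfrac{1}{1-\epsilon}$. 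Rearranging then yields the stated $\tfrac{1-\epsilon}{2}$ coefficient.

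The main obstacle is verifying that the swap partner $(q^j,\cdot)$ is genuinely present in $PQ$ at iteration $j$, i.e.\ that it was not permanently removed by the re-insertion cap at line~16 in an earlier iteration. I would argue that a pair can reach the cap of $\tfrac{1}{\epsilon}\log\tfrac{B}{2\epsilon}$ re-insertions only after its priority---and hence, by the invariant above, its marginal gain at every later point---has been driven below a negligible level, since each failed acceptance test shrinks the stored priority by a factor of at least $(1-\epsilon)$; consequently a discarded pair contributes so little that the per-iteration inequality still holds within the claimed slack. Making this precise, and separating it cleanly from the early-termination accounting deferred to the proof of Theorem~\ref{th:fair-faster} (where the case $\ell<B$ is handled), is where the real care lies; the remaining algebra in the three cases is identical to Lemma~\ref{lem:s-o} up to the uniform replacement of $\tfrac12$ by $\tfrac{1-\epsilon}{2}$.
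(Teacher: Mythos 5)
Your proposal follows essentially the same route as the paper's proof: the paper likewise establishes the approximate-greedy inequality $\Delta_{e^j,i^j}f(\mathbf{s}^{j-1}) \geq (1-\epsilon)\Delta_{e,i}f(\mathbf{s}^{j-1})$ by combining the acceptance test $\Delta_{e^j,i^j}f(\mathbf{s}^{j-1})\geq(1-\epsilon)\tau_j$ with the lazy-evaluation/submodularity observation that every other feasible pair's current marginal gain is bounded by its stored (stale) priority, which is at most $\tau_j$, and then reruns the three cases of Lemma~\ref{lem:s-o} with the coefficient $\tfrac{1}{1-\epsilon}$ attached. The one point you flag as requiring ``real care''---that the swap partner $(q^j,\cdot)$ might already have been evicted by the re-insertion cap at line~16, so that the invariant-based bound against $\tau_j$ does not directly apply to it---is a genuine subtlety that the paper's own proof of the lemma does not explicitly resolve either (it only accounts for capped pairs in Case~2 of the proof of Theorem~\ref{th:fair-faster}), so your treatment is, if anything, more careful than the original.
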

    \begin{proof}
    Note that, if the element-type pair $(e^j, i^j)$ is added to the partial solution 
    $\mathbf{s}^{j-1}$, the following condition is satisfied
    \[ \Delta_{e^j, i^j} f(\mathbf{s}^{j-1}) \geq (1-\epsilon)\tau_j.\]

    Furthermore, for every $e \notin \supp(\mathbf{s}^{j-1}) \cup \{ e^{j} \}$, such
    that $\forall i \in [k], (e,i) \in \mathcal{F}(\mathbf{s}^{j-1})$ (as guaranteed 
    in line 5) are not at the top of the priority queue and their marginal gains evaluated in all previous iterations $b \in [0, j-1)$ is $\Delta_{e, i} f(\mathbf{s}^{b}) < \tau_j.$
    
    We know that $\mathbf{s}^{b} \prec \mathbf{s}^{j-1}$ and hence, due to submodularity, we have 
    \[ \Delta_{e, i} f(\mathbf{s}^{j-1}) \leq \Delta_{e, i} f(\mathbf{s}^{b}) < \tau_j. \]

    Therefore, the following relation holds for every $(e,i) \in  \left(V\setminus \supp(\mathbf{s}^{j-1}) \cup \{ e^{j} \}\right) \times [k]$, 
    \begin{equation}
        \Delta_{e^{j}, i^{j}} f(\mathbf{s}^{j-1}) \geq (1-\epsilon) \Delta_{e, i} f(\mathbf{s}^{j-1}).   
        \label{eq:threshold}
    \end{equation}
    Similar to the proof of Lemma \ref{lem:s-o}, we have three cases. 

    \noindent {\bf Case 1A}: The formulations of $f(\mathbf{o}^j)$ and $f(\mathbf{o}^{j-1})$ can be found in Case 1A of Theorem \ref{th:fair-greedy} proof.
    Therefore, the rest proof of this lemma follows the derivations of Eq. (\ref{eq:case1a}) with incorporation of Eq. (\ref{eq:threshold}).
    \begin{align*}
        &f(\mathbf{o}^{j-1})-f(\mathbf{o}^{j}) \\
        &\leq \Delta_{q^j, i^j} f(\mathbf{s}^{j-1})+\Delta_{e^j, i'} f(\mathbf{s}^{j-1}) \\
        &\leq \frac{1}{1-\epsilon}\Delta_{e^j, i^j} f(\mathbf{s}^{j-1})+\frac{1}{1-\epsilon}\Delta_{e^j, i^j} f(\mathbf{s}^{j-1})~~\mbox{due to Eq. (\ref{eq:threshold})}\\
        &=\frac{2}{1-\epsilon}(f(\mathbf{s}^{j})-f(\mathbf{s}^{j-1})).
    \end{align*}
    The Cases 1B and 2 proofs of Lemma \ref{lem:s-o-2}  can be derived similarly to Case 1A shown above.
    \end{proof}

    \subsection{Proof of Theorem \ref{th:fair-faster}}
    \label{proof:th2}
    Since the constructions of $\mathbf{o}^j$ are the same as in Theorem \ref{th:fair-greedy}, Observation  \ref{obs:sb-ob} still holds. For Theorem \ref{th:fair-faster}, we consider two cases related
    to the size of the final solution.
    
    \noindent\textbf{Case 1: $|\supp(\mathbf{s})|=B$.\ }
    In this case, the derivations are similar to the $\frac{1}{3}$-approximation proof in Theorem \ref{th:fair-greedy}.
    \begin{align*}
        &f(\mathbf{o}) - f(\mathbf{s}) = f(\mathbf{o}^0) - f(\mathbf{o}^B)= \sum_{j=1}^B \Big [ f(\mathbf{o}^{j-1}) - f(\mathbf{o}^j) \Big ]\\
        &\leq \frac{2}{1-\epsilon}\sum_{j=1}^B \Big [f(\mathbf{s}^{j}) - f(\mathbf{s}^{j-1})\Big ]\quad\quad\mbox{due to Lemma \ref{lem:s-o-2}}\\
        &=\frac{2}{1-\epsilon}f(\mathbf{s}).
    \end{align*}
    Rearranging the above inequality, we have
    \[ f(\mathbf{s}) \geq \frac{1}{ \frac{2}{1-\epsilon}+1} f(\mathbf{o}) = \frac{1-\epsilon}{3-\epsilon}f(\mathbf{o}) \geq \left(\frac{1}{3}-\epsilon\right) f(\mathbf{o}).\]
    
    \noindent\textbf{Case 2: $|\supp(\mathbf{s})|<B$.\ }
    We denote $\Tilde{\mathbf{s}}$ as solution with \textit{exactly} $B$ elements if there was no line 16 in Algorithm~\ref{alg:fair-fast}, and $|\supp(\mathbf{s})| = \ell$. The elements $e \in \Tilde{\mathbf{s}} \setminus \mathbf{s}$ are not added due to line 16 of the algorithm, which has been considered (as they are at the top of the priority queue) for $\frac{1}{\epsilon}\log\frac{B}{2\epsilon}$ times. Therefore, for every additional pair $(e,i)$ such that $e \in \mathbf{\Tilde{s}} - \mathbf{s}$, we have:
    \begin{equation}
    \label{eq:trivial}
        \Delta_{e,i} f(\mathbf{s}) < (1-\epsilon)^{\frac{\log B/2\epsilon}{\epsilon}}f(\mathbb{I}_{(e,i)}) \leq \frac{2\epsilon}{B}f(\mathbb{I}_{(e,i)}). 
    \end{equation}
    
    Next, we bound the utility value of the elements not added to the final solution due to line 16.
    \begin{align*}
        f(\Tilde{\mathbf{s}}) - f(\mathbf{s}) &\leq \sum_{(e,i) \in \Tilde{\mathbf{s}} - \mathbf{s}} \Delta_{e,i} f(\mathbf{s})\\
        &\leq (B-\ell) \frac{2\epsilon}{B}f(\mathbb{I}_{(e,i)}) \quad\quad\mbox{due to Eq. (\ref{eq:trivial})}\\
        &\leq (B-\ell) \frac{2\epsilon }{3B} f(\mathbf{o})\quad\quad\mbox{assume $f(e)\leq \frac{1}{3}f(\mathbf{o})$}\\
        &\leq  \frac{2\epsilon}{3}f(\mathbf{o}).
    \end{align*}
    Rearranging the above inequality, and since $\Tilde{\mathbf{s}}$ is a $\frac{1-\epsilon}{3-\epsilon}$-approximation solution, we have
    \begin{align*}
        f(\mathbf{s}) \geq f(\Tilde{\mathbf{s}}) -  \frac{2\epsilon}{3}f(\mathbf{o})
        &\geq \frac{1-\epsilon}{3-\epsilon}f(\mathbf{o}) -  \frac{2\epsilon}{3}f(\mathbf{o})\\
        &= \left(\frac{1}{3}-\epsilon\right)f(\mathbf{o}).
    \end{align*}

Cases 1 and 2 conclude the proof of $(\frac{1}{3}-\epsilon)$-approximation ratio of Theorem \ref{th:fair-faster}.

\begin{lemma}
    \label{lem:runtime-faster}
    The number of oracle evaluations of Algorithm \ref{alg:fair-fast} is bounded by $\mathcal{O}(\frac{kn}{\epsilon}\log\frac{B}{\epsilon})$.
\end{lemma}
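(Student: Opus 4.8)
The plan is to account for oracle evaluations by their two sources in Algorithm~\ref{alg:fair-fast} and to cap, for each of the $kn$ element-type pairs, the number of times it can reach the top of the priority queue. The first source is the initialization at line~2, where the queue is populated with $f(\mathbb{I}_{(e,i)})$ for every pair $(e,i) \in V \times [k]$; this contributes exactly $kn$ evaluations. The second source is line~11, where a single marginal gain $\Delta_{e^j, i^j} f(\mathbf{s}^{j-1})$ is computed each time a pair is popped at line~9. Note that the extendability filtering at line~5 is purely combinatorial (it invokes Definition~\ref{def:extendable}, not $f$) and therefore contributes no oracle calls, so these two are the only sources to track.

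First I would argue that each pair $(e,i)$ can be popped at most $\frac{1}{\epsilon}\log\frac{B}{2\epsilon} + 1$ times over the whole run. The counter $u((e,i))$ starts at $0$ and is incremented (line~10) exactly once per pop, and a popped pair is re-inserted (line~17) only when the guard at line~16 holds, i.e.\ $u((e,i)) \leq \frac{1}{\epsilon}\log\frac{B}{2\epsilon}$. Hence once a pair has been popped $\frac{1}{\epsilon}\log\frac{B}{2\epsilon} + 1$ times its counter exceeds the threshold and it is never returned to the queue. Combined with the fact that a pair enters the queue only at initialization (line~2) or via a re-insertion (line~17) — never via the line-5 filter, which only deletes entries — this bounds its total number of pops, and thus its total number of line-11 evaluations, by $\frac{1}{\epsilon}\log\frac{B}{2\epsilon} + 1 = \mathcal{O}\!\left(\frac{1}{\epsilon}\log\frac{B}{\epsilon}\right)$.

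Summing the two sources then yields the claim: the total number of evaluations is at most $kn + kn\left(\frac{1}{\epsilon}\log\frac{B}{2\epsilon} + 1\right) = \mathcal{O}\!\left(\frac{kn}{\epsilon}\log\frac{B}{\epsilon}\right)$, using $\log\frac{B}{2\epsilon} = \mathcal{O}(\log\frac{B}{\epsilon})$. I expect the only subtle point to be the per-pair pop count: one must confirm that the counter mechanism at lines~10 and 16--17 is the \emph{sole} channel by which a pair re-enters the queue, so that no pair is re-evaluated more than $\mathcal{O}(\frac{1}{\epsilon}\log\frac{B}{\epsilon})$ times regardless of how often it is removed at line~5. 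Since line~5 never re-adds a pair and the counter $u((e,i))$ persists across such deletions, this reinsertion cap holds globally, after which the arithmetic above completes the proof.
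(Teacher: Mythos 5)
Your proof is correct and follows essentially the same argument as the paper: there are at most $kn$ element-type pairs and the counter guard at line~16 caps the number of times each can be (re-)evaluated at roughly $\frac{1}{\epsilon}\log\frac{B}{2\epsilon}$, giving $\mathcal{O}\!\left(\frac{kn}{\epsilon}\log\frac{B}{\epsilon}\right)$ evaluations in total. Your accounting is slightly more careful than the paper's terse version (you separately count the $kn$ initialization calls and the final pop before the guard fails), but the underlying idea is identical.
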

\begin{proof}
The size of the priority queue is at most $nk$ by line 2 of Algorithm \ref{alg:fair-fast}.
As per line 16, an element-type in the priority queue can be evaluated for at most $\frac{1}{\epsilon}\log\frac{B}{2\epsilon}$ times. Thus, the total number oracle evaluations is at most $\frac{kn}{\epsilon}\log\frac{B}{2\epsilon} \in \mathcal{O}(\frac{kn}{\epsilon}\log\frac{B}{\epsilon})$. 
\end{proof}


\subsection{Fair Approximate $k$-submodular Maximization.}
Our threshold algorithm (Algorithm \ref{alg:fair-fast}) can also handle approximate oracles. We present the  approximation results are as follows. 

\begin{theorem}
    Let $f$ be a $\delta$-approximate $k$-submodular function, Algorithm \ref{alg:fair-fast} admits $\frac{1-2\epsilon(1+\delta)}{1+2\delta B}\cdot \alpha$-approximation within $\mathcal{O}(\frac{kn}{\epsilon}\log\frac{B}{\epsilon})$ oracle evaluations, where $\alpha = \frac{1-\delta}{3+2B\left(\frac{1+\delta}{(1-\delta)(1-\epsilon)}-1\right)}$, $\epsilon \in (0,1)$ and $\delta \in [0,1)$. 
    \label{th:approx-thresh}
\end{theorem}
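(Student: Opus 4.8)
The plan is to prove Theorem~\ref{th:approx-thresh} by fusing the two relaxations that were analysed separately above: the multiplicative threshold slack of Theorem~\ref{th:fair-faster} and the $\delta$-approximate oracle of Theorem~\ref{th:approx-greedy}. I would keep the construction of $\mathbf{o}^{j}$ and Observation~\ref{obs:sb-ob} untouched, since both are purely combinatorial and reference neither $\epsilon$ nor $\delta$; in particular the telescoping identity $f(\mathbf{o})-f(\mathbf{s})=\sum_{j}\left(f(\mathbf{o}^{j-1})-f(\mathbf{o}^{j})\right)$ and the terminal equality $\mathbf{o}^{B}=\mathbf{s}^{B}$ carry over verbatim. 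The runtime claim is immediate from Lemma~\ref{lem:runtime-faster}, as replacing $f$ by $\Tilde f$ does not change the number of queue operations.

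The first substantive step is to establish a single per-iteration inequality that absorbs both losses at once. Because Algorithm~\ref{alg:fair-fast} selects $(e^{j},i^{j})$ using the surrogate $\Tilde f$ and only up to a $(1-\epsilon)$ factor of the stored priority, the selection rule (exactly the chain leading to Eq.~(\ref{eq:threshold}), but written for $\Tilde f$) gives $\Tilde f(\mathbf{s}^{j})\geq(1-\epsilon)\,\Tilde f(\mathbf{s}^{j-1}+\mathbb{I}_{(q,i)})$ for every feasible competitor $(q,i)$. Sandwiching both sides with Definition~\ref{def:approximate} converts this into a statement about the true $f$: writing $\beta=\frac{1+\delta}{(1-\delta)(1-\epsilon)}$, one obtains $\Delta_{q,i}f(\mathbf{s}^{j-1})\leq \beta f(\mathbf{s}^{j})-f(\mathbf{s}^{j-1})$. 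Substituting this bound into each of the three construction cases of $\mathbf{o}^{j}$, precisely as in Lemmas~\ref{lem:s-o} and~\ref{lem:s-o-2}, yields the master inequality $\tfrac12\left(f(\mathbf{o}^{j-1})-f(\mathbf{o}^{j})\right)\leq \beta f(\mathbf{s}^{j})-f(\mathbf{s}^{j-1})$.

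From here I would split on the final solution size, as in Section~\ref{proof:th2}. If $|\supp(\mathbf{s})|=B$, summing the master inequality and using monotonicity ($f(\mathbf{s}^{j})\leq f(\mathbf{s})$ and $f(\mathbf{s}^{0})=0$) telescopes the right side to $[1+(\beta-1)B]\,f(\mathbf{s})$, so $f(\mathbf{o})\leq[3+2(\beta-1)B]\,f(\mathbf{s})$; carrying the $\delta$-conversion of the terminal value through this last step contributes the extra $(1-\delta)$ in the numerator and delivers $f(\mathbf{s})\geq \alpha\, f(\mathbf{o})$ with $\alpha$ as stated. If $|\supp(\mathbf{s})|<B$, I would mirror Case~2 of Section~\ref{proof:th2}: let $\Tilde{\mathbf{s}}$ be the hypothetical size-$B$ run obtained by ignoring the discard rule on line~16. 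Each pair $(e,i)\in\Tilde{\mathbf{s}}\setminus\mathbf{s}$ survived $\frac1\epsilon\log\frac{B}{2\epsilon}$ re-evaluations, so submodularity of $\Tilde f$ forces geometric decay, $\Delta_{e,i}\Tilde f(\mathbf{s})<(1-\epsilon)^{\frac1\epsilon\log\frac{B}{2\epsilon}}\,\Tilde f(\mathbb{I}_{(e,i)})\leq \frac{2\epsilon}{B}\,\Tilde f(\mathbb{I}_{(e,i)})$; converting to $f$ inflates the singleton value by $(1+\delta)$, and summing over the $B-\ell$ missing pairs bounds $f(\Tilde{\mathbf{s}})-f(\mathbf{s})$ by a quantity proportional to $\epsilon(1+\delta)f(\mathbf{o})$. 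Combining with $f(\Tilde{\mathbf{s}})\geq\alpha f(\mathbf{o})$ from the size-$B$ analysis, and normalising by the $\delta$-drift accumulated over the $B$ telescoped steps, produces the prefactor $\frac{1-2\epsilon(1+\delta)}{1+2\delta B}$ and hence the claimed ratio $\frac{1-2\epsilon(1+\delta)}{1+2\delta B}\cdot\alpha$.

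I expect the main obstacle to be propagating the two error sources simultaneously without double-counting. In contrast to the exact proofs, a single marginal gain $\Delta_{e,i}f$ can no longer be tied cleanly to $\Delta_{e,i}\Tilde f$, because the two-sided bound $(1-\delta)f\leq\Tilde f\leq(1+\delta)f$ leaves a residual of order $\delta\big(\Tilde f(\mathbf{s}+\mathbb{I}_{(e,i)})+\Tilde f(\mathbf{s})\big)$ at every step; controlling the accumulation of these residuals across all $B$ iterations is exactly what generates the $1+2\delta B$ denominator, and arranging for this additive $\delta$-drift to interact correctly with the multiplicative $(1-\epsilon)$ threshold slack---so that $\delta=0$ recovers Theorem~\ref{th:fair-faster} and $\epsilon\to0$ recovers Theorem~\ref{th:approx-greedy}---is the principal bookkeeping difficulty.
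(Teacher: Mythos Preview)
Your proposal is correct and follows essentially the same route as the paper's own sketch: keep the $\mathbf{o}^{j}$ construction and Observation~\ref{obs:sb-ob} intact, derive a per-iteration inequality of the form $\frac{1+\delta}{1-\delta}f(\mathbf{s}^{j})-f(\mathbf{s}^{j-1}) \geq \frac{1-\epsilon}{2}\bigl(f(\mathbf{o}^{j-1})-f(\mathbf{o}^{j})\bigr)$ by combining Definition~\ref{def:approximate} with the threshold slack, and then rerun the two-case argument of Section~\ref{proof:th2}. One small slip in wording: in Case~2 you appeal to ``submodularity of $\Tilde f$'' to pass the decayed priority down to $\mathbf{s}$, but $\Tilde f$ need not be submodular---the geometric decay of the stored priority is purely a consequence of the re-insertion rule (line~17), and the transfer to $\mathbf{s}$ should invoke submodularity of the true $f$ together with the $\delta$-sandwich, exactly the residual-tracking difficulty you correctly flag in your final paragraph.
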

\noindent
{\it Proof Sketch.} 
Similar to Theorem \ref{th:fair-faster}, $\mathbf{o}^j$ are constructed in a same way. The proofs of this theorem use Definition \ref{def:approximate} and modified Lemma \ref{lem:s-o-2}.
Assume the oracle $\Tilde{f}$ in Algorithm \ref{alg:fair-fast} is $\delta$-approximate to the exact $k$-submodular $f$. 
Applying Definition \ref{def:approximate}, we have an inequality similar to Lemma \ref{lem:s-o-2}: $\frac{1+\delta}{1-\delta}f(\mathbf{s}^{j})-f(\mathbf{s}^{j-1}) \geq \frac{1-\epsilon}{2}(f(\mathbf{o}^{j-1}) -f(\mathbf{o}^{j}))$. The approximation ratio of Theorem \ref{th:approx-thresh} can be obtained by following the steps in Section \ref{proof:th2}.

\section{Experiments}
\label{sec:experiements}


\subsection{Algorithms Compared}

\noindent
$\bullet$ \textsc{Fair-Greedy} (Algorithm~\ref{alg:fair}) and \textsc{Fair-Threshold} (Algorithm~\ref{alg:fair-fast})~\footnote{The code can be found: \url{ https://github.com/yz24/fair-k-submodular}}: there are total budget constraint $B$ and fairness constraint (both upper bounds $u_i$ and lower bounds $\ell_i$ for each type $i\in [k]$).

\noindent
$\bullet$ \textsc{TS-greedy} \cite{ohsaka2015monotone}: Total size constrained greedy algorithm with a $\frac{1}{2}$ approximation. The total size constraint is $B$. The results of this algorithm may violate the fairness constraint (both lower and upper bounds).

\noindent
$\bullet$ \textsc{IS-greedy} \cite{ohsaka2015monotone}: Individual size constrained greedy algorithm with a $\frac{1}{3}$ approximation. The individual size constraints (upper bounds) are the same as $u_i$ for each type $i \in [k]$. We give an additional total budget $B$ to this algorithm. The solutions from this algorithm may violate the lower bounds of the fairness constraint. 

For all the above algorithms, we also consider algorithms with $\delta$-approximate oracles \cite{zheng2021maximizing}. 
\textsc{TS-greedy} realizes $\frac{(1-\delta)^2}{2(1-\delta+B\delta)(1+\delta)}$ approximation for approximate oracles while \textsc{IS-greedy} realizes
$\frac{(1-\delta)^2}{(3-3\delta+2B\delta)(1+\delta)}$ approximation.

\noindent
$\bullet$ \textsc{random-fair}: A random feasible fair solution set of $B$ elements satisfying the fairness constraint. We run this procedure ten times and evaluate the expected objective value of a random fair solution.


\subsection{Evaluation Metrics} 

\noindent$\bullet$ \textbf{Objective values}. The objective values are the expected number of influence spread for the application of fair influence maximization with $k$ topics (see Section \ref{sec:influence}) and the entropy for the application of fair sensor placement with $k$ types (see Section \ref{sec:sensor}).
    
\noindent$\bullet$ \textbf{Oracle evaluations}. All algorithms considered are designed under the standard value oracle model, which assumes that an exact or an approximate oracle can return the value $f(S)$ when provided with a set $S \subseteq V$. Therefore, for comparisons, we report the number of evaluations of $f$ as a metric of time complexity.

\noindent$\bullet$ \textbf{Bias Error}. Since the baselines have no fairness constraint, they can produce biased solutions. Thus, we use an error metric \cite{el2020fairness} to quantify the unfairness of a solution $\mathbf{s}$: $\text{err}(\mathbf{s}) = \max_{i \in [k]} \{|\supp_i(\mathbf{s})|-u_i,\ell_i-|\supp_i(\mathbf{s})|,0\}$. The error measures the number of elements in a solution that violates the fairness constraint. The error takes value from $[0,2B]$; the higher the value is, the more biased a solution is.

\subsection{Fair Influence Maximization with $k$ Topics}

\label{sec:influence}
The $k$-topic independent cascade ($k$-IC) model \cite{ohsaka2015monotone} generalizes the independent cascade model \cite{kempe2003maximizing} in influence diffusion. 
Given a social network $G=(V, E)$, each edge $(u\in V, v\in V) \in E$ is associated with weights 
$\{p_{u, v}^i\}_{i \in[k]}$, where $p_{u, v}^i$ represents the strength of influence from user $u$ to user $v$ on topic $i$. The influence spread $\sigma:(k+1)^V \rightarrow \mathbb{R}^{+}$ in the $k$-IC model is defined as the expected number of combined influenced nodes by a seed set. Formally, $\sigma(\mathbf{s})=\mathbb{E}\left[\left|\bigcup_{i \in[k]} A_i\left(\supp_i(\mathbf{s})\right)\right|\right]$, where $A_i\left(\supp_i(\mathbf{s})\right)$ is a random variable representing the set of influenced nodes in the diffusion process of the $i$-th topic. 
The fair $k$-submodular maximization (Problem \ref{problem:fair}) is to select a seed set $\mathbf{s} \in (k+1)^V$ such that $\argmax_{\mathbf{s} \in \mathcal{F}} \sigma(\mathbf{s}) $

The influence spread function $\sigma$ is monotone $k$-submodular~\cite{ohsaka2015monotone}. When the influence spread of a solution $\mathbf{s} \in (k+1)^V$ is not accessible but can be approximated within a $\delta$ error, i.e., the approximate influence spread $\Tilde{\sigma}(\mathbf{s})$ satisfies $(1-\delta)\sigma(\mathbf{s}) \leq \Tilde{\sigma}(\mathbf{s}) \leq (1+\delta)\sigma(\mathbf{s})$ for some $\delta \in [0,1)$.

\subsubsection{\bf Experiment settings}  
We use the preprocessed data of the Digg network with 3,523 users, 90,244 links, and $k=10$ topics~\cite{ohsaka2015monotone}. 
For the fairness constraint, we set the range of the number of users to be selected for each topic as $[3, 10]$. The total budget varies from $\{30,40,50,60,70,80,90,100\}$. 
The influence spread $\sigma(\cdot)$ is estimated by simulating the diffusion process 100 times. When the algorithms terminated, we simulated the diffusion process 1,000 times to obtain sufficiently accurate estimates of the spread. 

For the experiment of $\delta$-approximate $k$-submodular function cases, we simulate the approximate information spread by outputting a random number from $[(1-\delta)\sigma(\mathbf{s}), (1+\delta)\sigma(\mathbf{s})]$ for a given error $\delta \in [0,1)$. In our experiment, $\delta$ is set to be $0.1$ and $0.2$. Since the oracle values are drawn randomly, we run the algorithms for $\lceil \log |V| \rceil$ times and report the median of the oracle value of the $\lceil \log |V| \rceil$ solutions.

\subsubsection{\bf Results} 
The objective value comparison results of the influence maximization with $k$ topics are shown in Figs. \ref{fig:digg-1}, \ref{fig:digg-1-delta1} and \ref{fig:digg-1-delta2}. The oracle evaluation comparisons are presented in Figs. \ref{fig:digg-2}, \ref{fig:digg-2-delta1} and \ref{fig:digg-2-delta2}.
The errors for each budget of the baseline algorithms are illustrated in the top parts of Figs. \ref{fig:errors},  \ref{fig:errors-1} and \ref{fig:errors-2}.

From Fig. \ref{fig:digg-1}, we can see that {\sc TS-greedy} and {\sc Fair-Greedy} produce the highest objective values. Our faster algorithm {\sc Fair-Threshold} is also effective even with a high error threshold. On the other hand, {\sc TS-greedy} and {\sc Fair-Greedy} need the most oracle evaluations, as shown in Fig. \ref{fig:digg-2} and {\sc TS-greedy} will violate the fairness constraint when $B=80,90$ and $100$. However, our faster version {\sc Fair-Threshold} with $\epsilon=0.5$ has a big improvement regarding oracle evaluations. 

For the experiments with $\delta$-approximate oracles ($\delta \neq 0$), the objective values are lower compared to exact oracle settings. Nonetheless, our algorithms produce similar quality solutions to the baseline algorithms. The number of oracle evaluations does not strictly increase as the budget increases, which is due to the lazy evaluation implementations. In terms of the bias errors,  the {\sc TS-greedy} algorithm tends to add elements from one single topic, resulting in high error for $\delta=0.1$ and $0.2$. This is expected as the algorithms are not focused on 
conforming to the fairness constraints. 

\begin{figure*}[t]
\centering
\subfigure[Exact Oracles ($\delta=0$)]{
\centering
\includegraphics[width=0.26\textwidth]{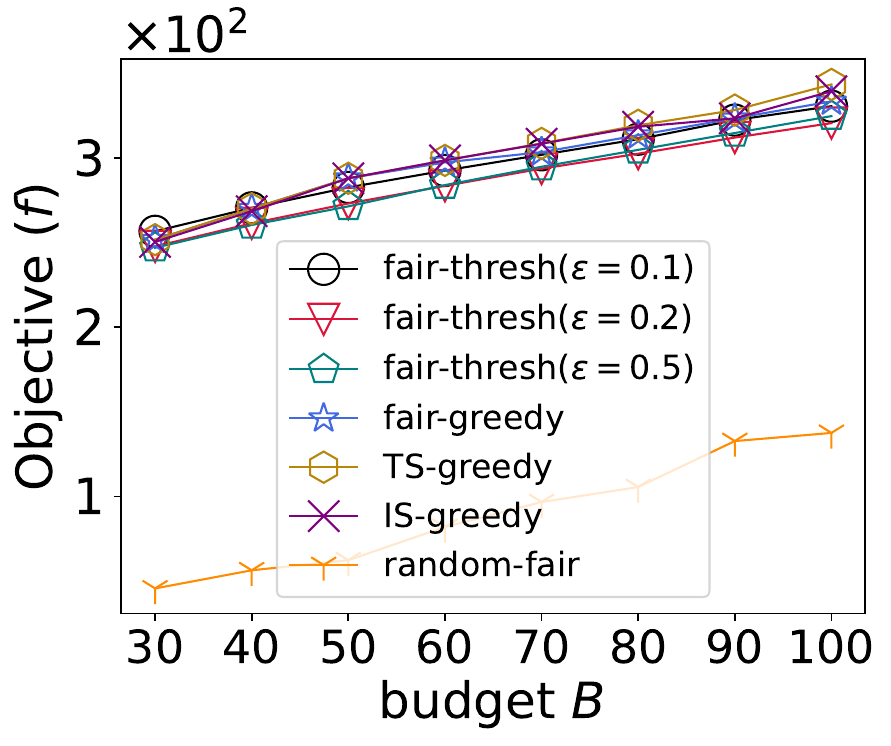}
\label{fig:digg-1}
}
~
\subfigure[Approximate Oracles ($\delta=0.1$)]{
\centering
\includegraphics[width=0.27\textwidth]{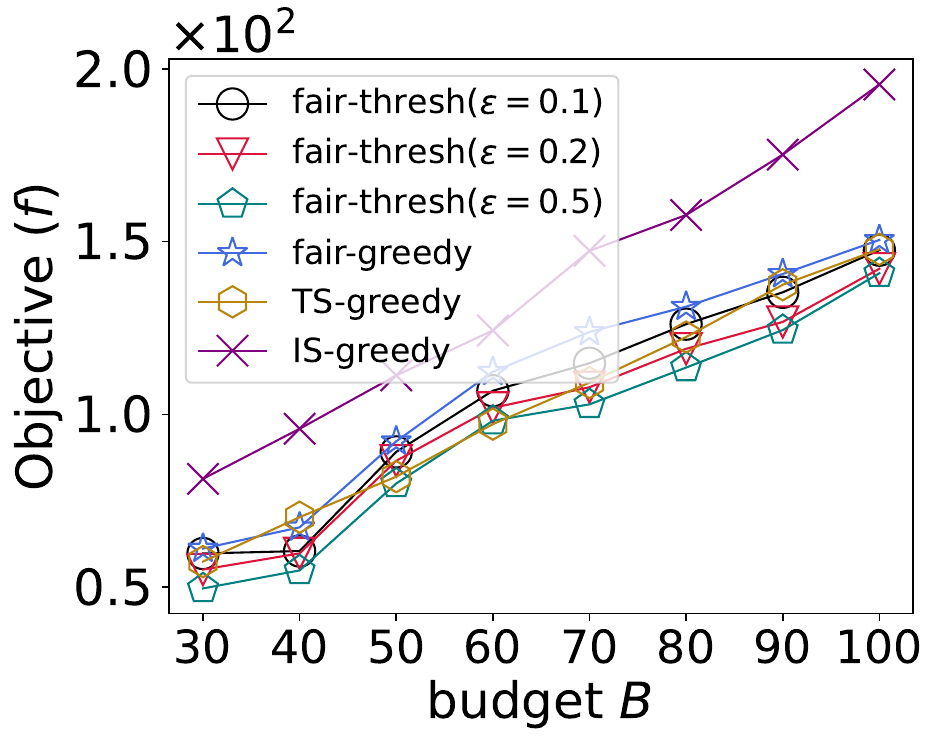}
\label{fig:digg-1-delta1}
}
~
\subfigure[Approximate Oracles ($\delta=0.2$)]{
\centering
\includegraphics[width=0.27\textwidth]{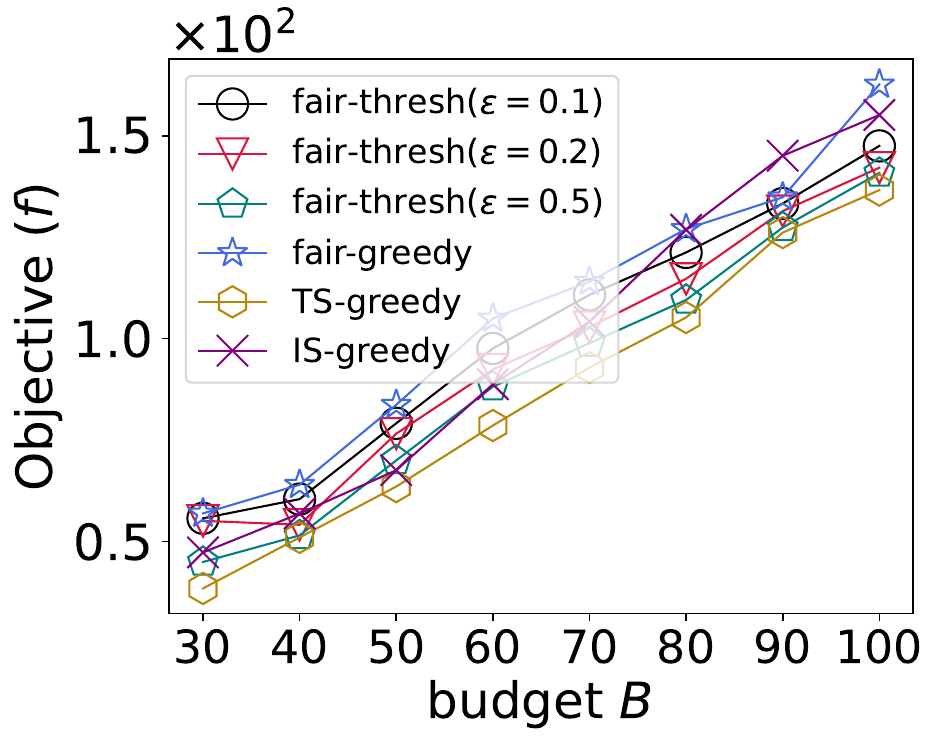}
\label{fig:digg-1-delta2}
}
\caption{Experiments on influence maximization with $k$ topics on Digg network: budget $B$ vs. objective values (expected influence spread).}
\end{figure*}

\begin{figure*}[t]
\centering
\subfigure[Exact Oracles ($\delta=0$)]{
\centering
\includegraphics[width=0.27\textwidth]{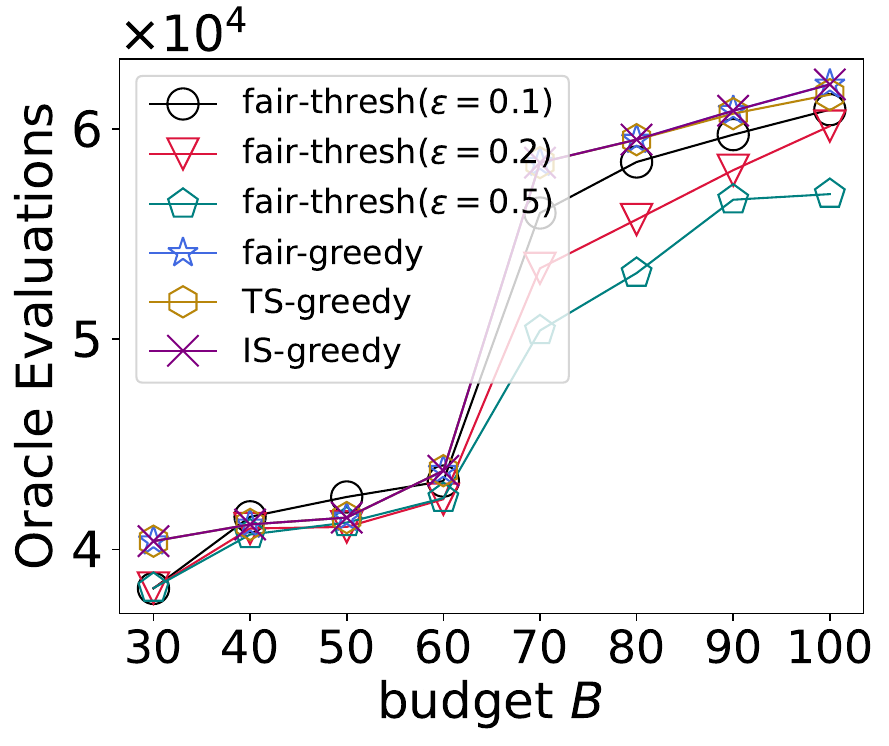}
\label{fig:digg-2}
}
~
\subfigure[Approximate Oracles ($\delta=0.1$)]{
\centering
\includegraphics[width=0.28\textwidth]{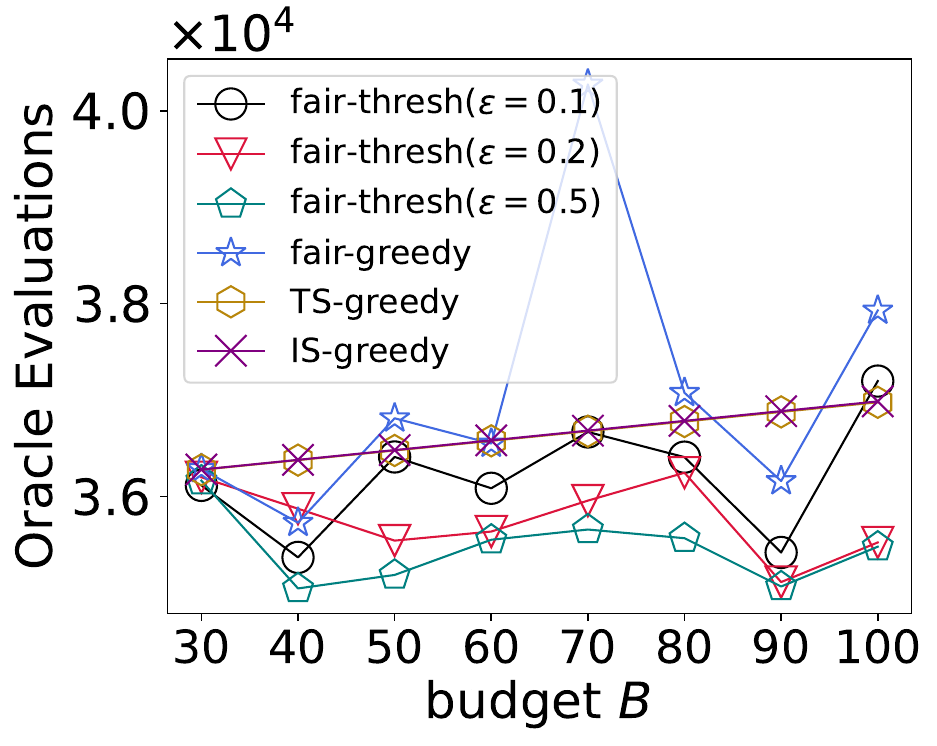}
\label{fig:digg-2-delta1}
}
~
\subfigure[Approximate Oracles ($\delta=0.2$)]{
\centering
\includegraphics[width=0.28\textwidth]{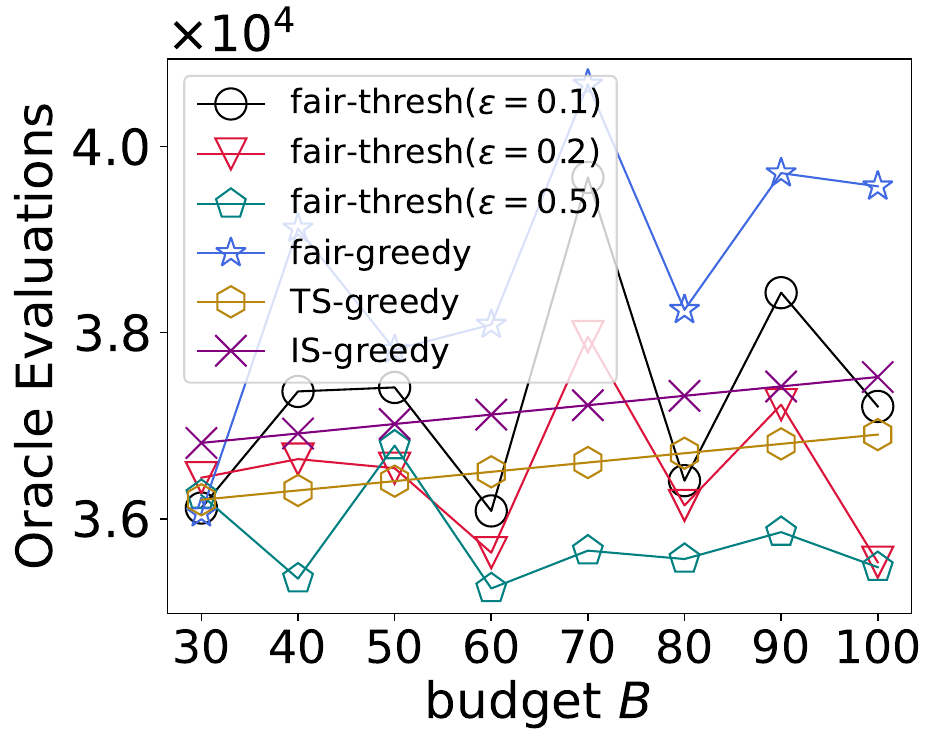}
\label{fig:digg-2-delta2}
}
\caption{Experiments on influence maximization with $k$ topics on Digg network: budget $B$ vs.  Oracle Evaluations.}
\end{figure*}


\begin{figure*}[t]
\centering
\subfigure[Exact Oracles ($\delta=0$)]{
\centering
\includegraphics[width=0.27\textwidth]{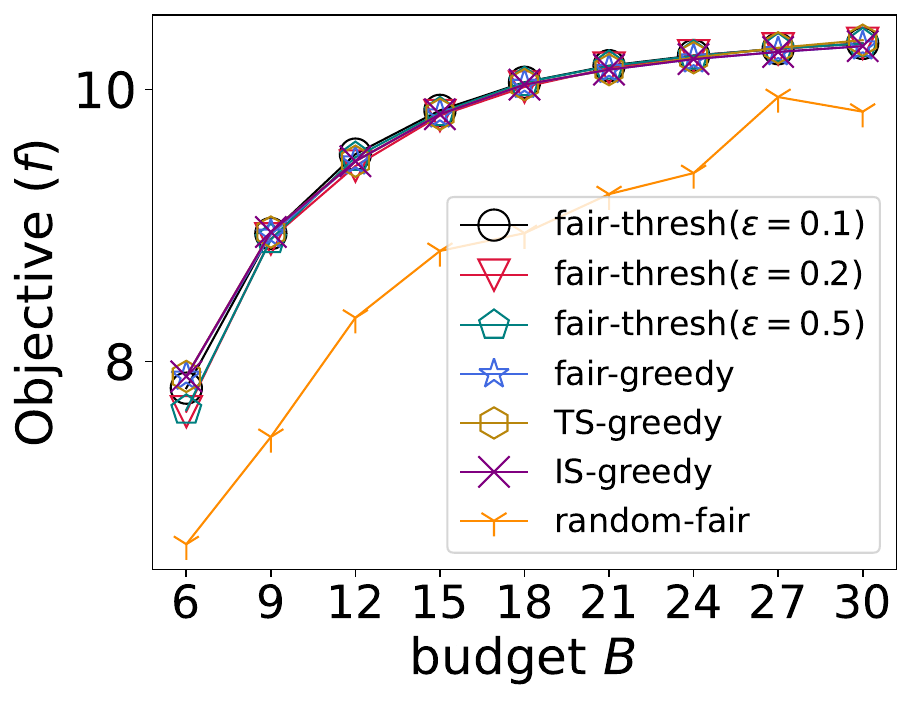}
\label{fig:sensor-1}
}
~
\subfigure[Approximate Oracles ($\delta=0.1$)]{
\centering
\includegraphics[width=0.27\textwidth]{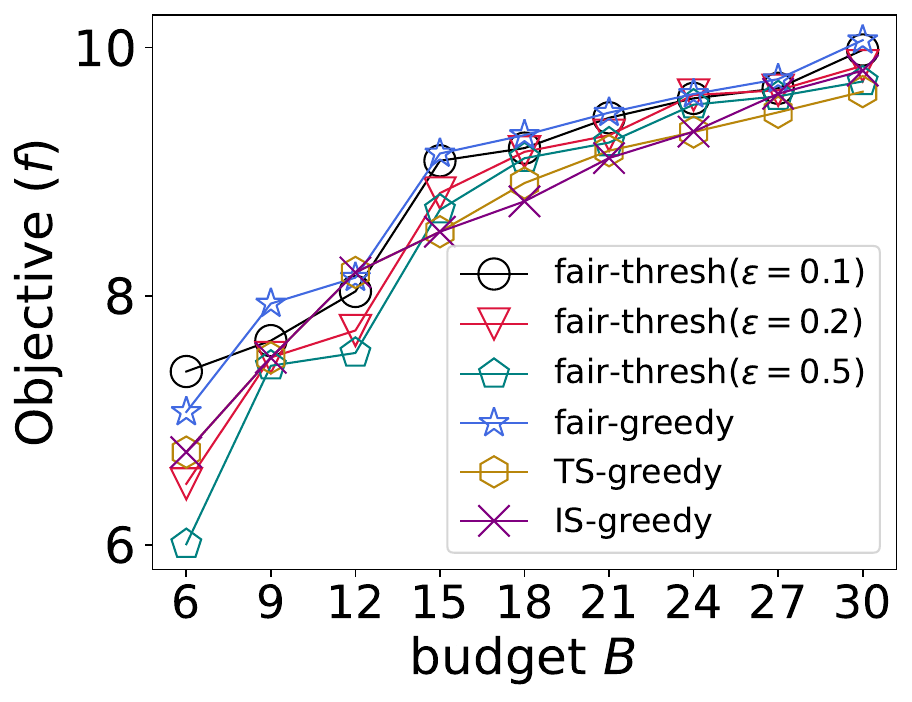}
\label{fig:sensor-1-delta1}
}
~
\subfigure[Approximate Oracles ($\delta=0.2$)]{
\centering
\includegraphics[width=0.27\textwidth]{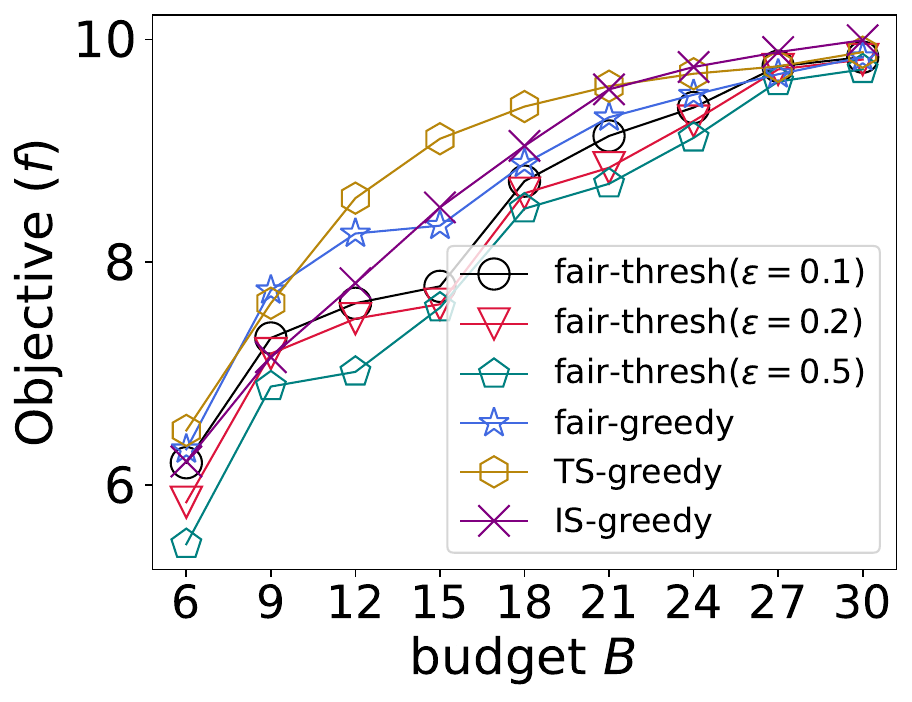}
\label{fig:sensor-1-delta2}
}
\caption{Experiments on sensor placement with $k$ types on Intel Lab sensor data: budget $B$ vs. objective values (entropy).}
\end{figure*}

\begin{figure*}[t]
\centering
\subfigure[Exact Oracles ($\delta=0$)]{
\centering
\includegraphics[width=0.28\textwidth]{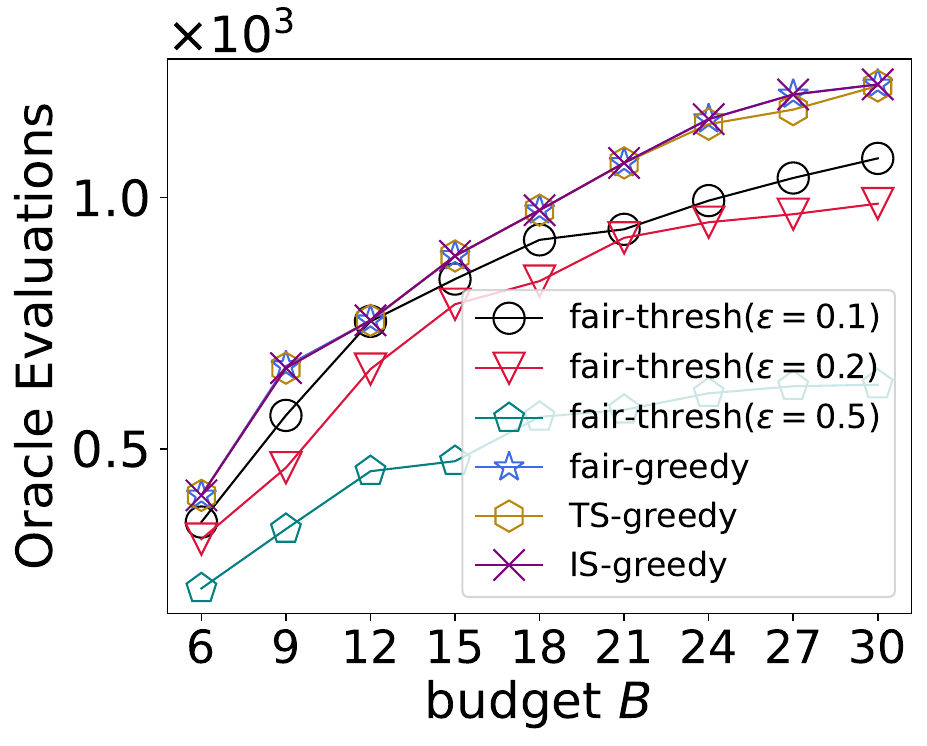}
\label{fig:sensor-2}
}
~
\subfigure[Approximate Oracles ($\delta=0.1$)]{
\centering
\includegraphics[width=0.27\textwidth]{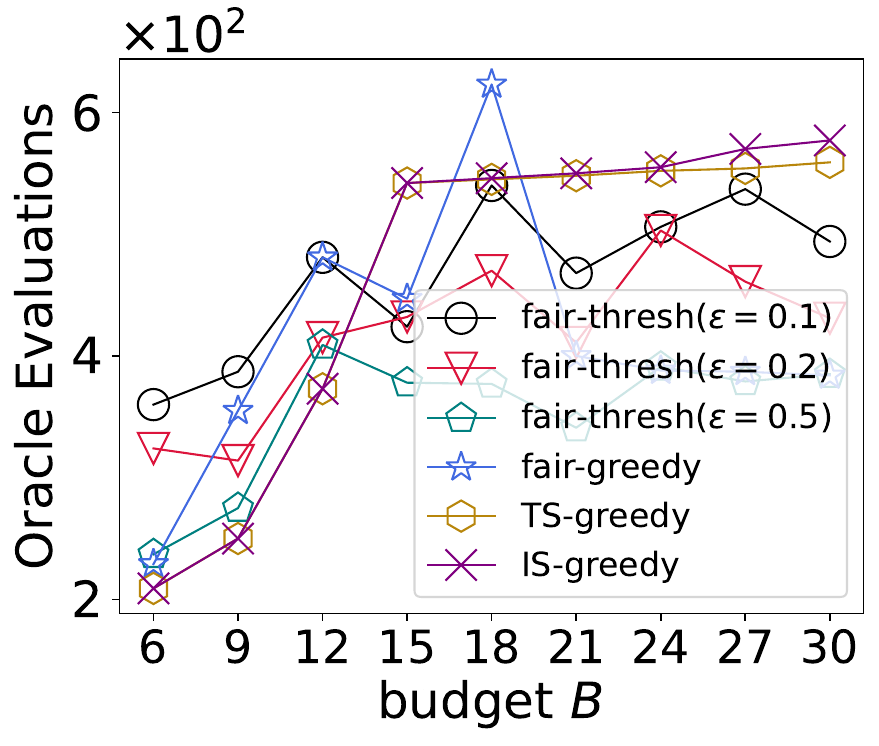}
\label{fig:sensor-2-delta1}
}
~
\subfigure[Approximate Oracles ($\delta=0.2$)]{
\centering
\includegraphics[width=0.27\textwidth]{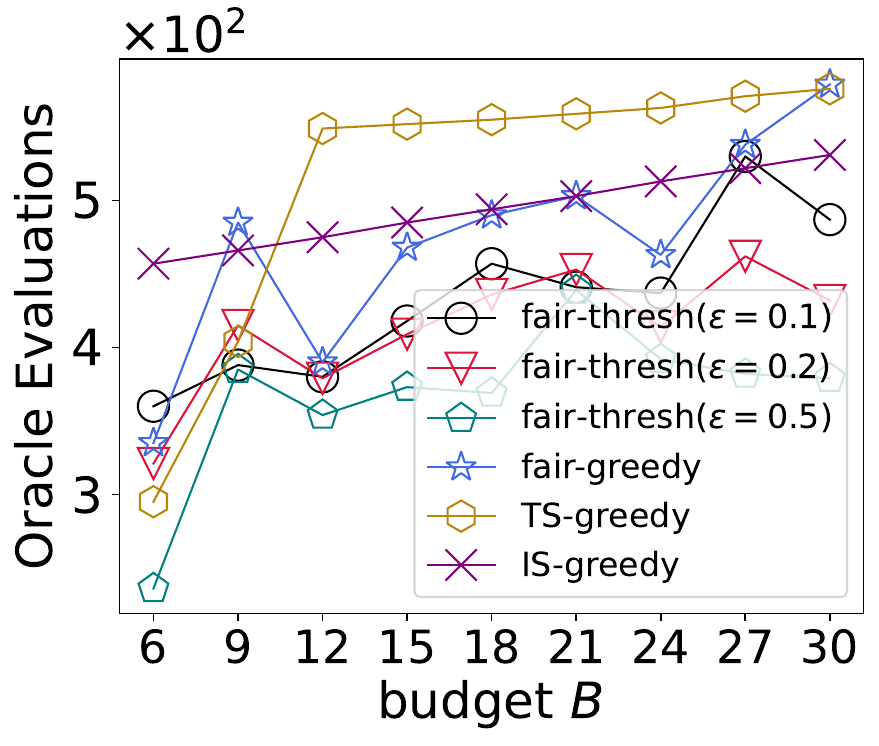}
\label{fig:sensor-2-delta2}
}
\caption{Experiments on sensor placement with $k$ types on Intel Lab sensor data: budget $B$ vs. Oracle Evaluations.}
\end{figure*}


\begin{figure*}[t]
\centering
\subfigure[Exact Oracles ($\delta=0$)]{
\centering
\includegraphics[width=0.30\textwidth]{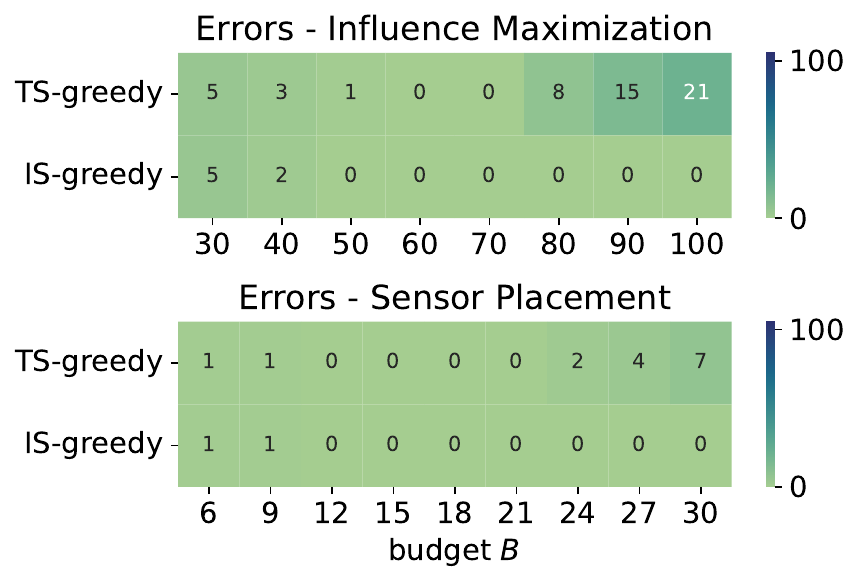}
\label{fig:errors}
}
~
\subfigure[Approximate Oracles ($\delta=0.1$)]{
\centering
\includegraphics[width=0.30\textwidth]{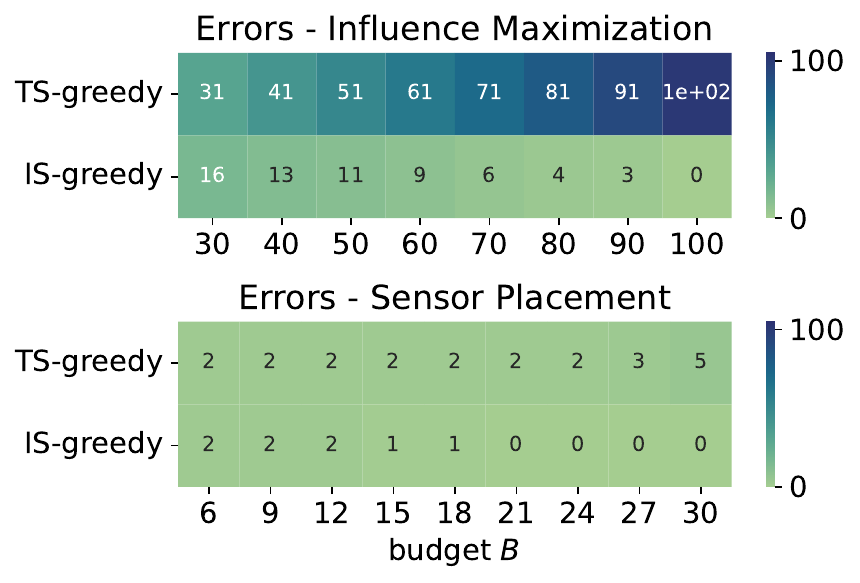}
\label{fig:errors-1}
}
~
\subfigure[Approximate Oracles ($\delta=0.2$)]{
\centering
\includegraphics[width=0.30\textwidth]{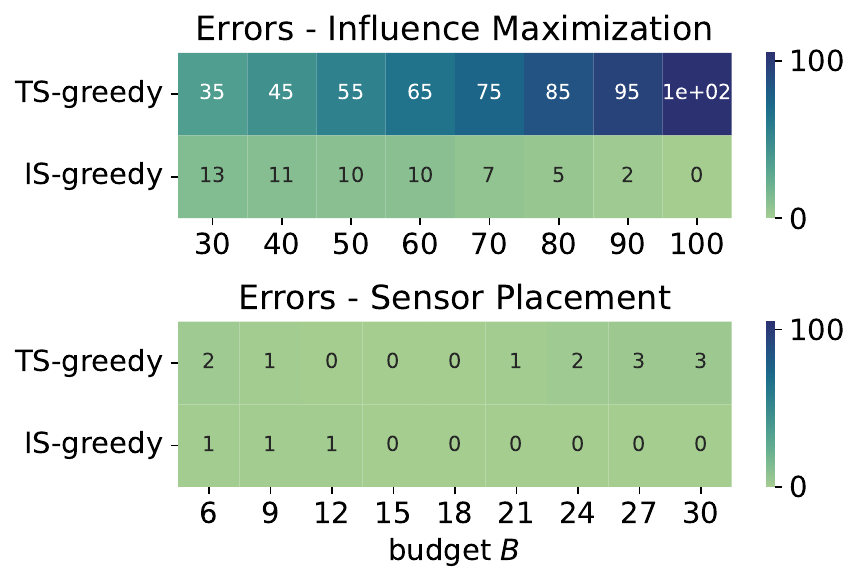}
\label{fig:errors-2}
}
\caption{Results of budget vs. errors for TS-greedy and IS-greedy algorithms.}
\end{figure*}


\subsection{Fair Sensor Placement with $k$ Types}
\label{sec:sensor}
We formulate Problem~\ref{problem:fair} in the context of the sensor placement with $k$ kinds of sensors.
Let $\Omega=$ $\left\{X_1, X_2, \ldots, X_n\right\}$ be a set of discrete random variables. The entropy of a subset $\mathbf{X}$ of $\Omega$ is defined as $H(\mathbf{X})=-\sum_{\mathbf{x} \in \mathrm{dom} X} \Pr[\mathbf{x}] \log \Pr[\mathbf{x}]$. The conditional entropy of $\Omega$ having observed $\mathbf{X}$ is $H(\Omega \mid \mathbf{X}):=H(\Omega)-H(\mathbf{X})$. 

In many sensor placement problems, the goal is to maximize the reduction of expected entropy.
Formally, the objective of this problem is to allocate a total of $B$ sensors of $k$ kinds to a set $V$ of $n$ locations, and each location can be instrumented with exactly one sensor. For each type $i \in [k]$, we place $B_i \in [\ell_i, u_i]$ sensors. Let $X_e^i$ be the random variable representing the observation collected from a sensor of the $i$-th kind if it is installed at the $e$-th location, and let $\Omega=\left\{X_e^i\right\}_{e \in V, i \in[k]}$. The problem of fair $k$-submodular maximization is to select a seed set $\mathbf{s} \in(k+1)^V$ such that $\argmax_{\mathbf{s} \in \mathcal{F}} H\left(\bigcup_{e \in \supp(\mathbf{s})}\left\{X_e^{\mathbf{s}(e)}\right\}\right). $


The entropy objective $f$ is monotone $k$-submodular \cite{ohsaka2015monotone}. When the entropy of a solution $\mathbf{s} \in (k+1)^V$ is not accessible but can be approximated within a $\delta$ error, i.e., the approximate influence spread $\Tilde{f}(\mathbf{s})$ satisfies $(1-\delta)f(\mathbf{s}) \leq \Tilde{f}(\mathbf{s}) \leq (1+\delta)f(\mathbf{s})$ for some $\delta \in [0,1)$. 

\subsubsection{\bf Experiment settings}
We use the preprocessed Intel Lab dataset~\cite{ohsaka2015monotone}, which contains approximately $2.3$ million readings collected from 54 sensors deployed in the Intel Berkeley research lab between February 28th and April 5th, 2004. Three kinds of measures, temperature, humidity, and light values, are extracted and discretized into bins of 2 degrees Celsius each, 5 points each, and 100 luxes each, respectively. 
Therefore, $k=3$ kinds of sensors are to be allocated to $n=54$ locations. For the fairness constraint, we set budget ranges for all three sensor measures to $[2, 10]$. We vary the total budget from $B=\{6, 9, 12, 15, 18, 21, 24, 27, 30\}$.

For the experiment of $\delta$-approximate $k$-submodular function cases, we simulate the approximate entropy of a subset of sensors with a random number in $[(1-\delta)f(\mathbf{s}), (1+\delta)f(\mathbf{s})]$ for a given error $\delta \in [0,1)$.

\subsubsection{\bf Results}  
The objective value comparison results of the influence maximization with $k$ topics are shown in Figs. \ref{fig:sensor-1}, \ref{fig:sensor-1-delta1}, and \ref{fig:sensor-1-delta2}. The oracle evaluation comparisons are presented in Figs. \ref{fig:sensor-2}, \ref{fig:sensor-2-delta1} and \ref{fig:sensor-2-delta2}.
The errors for each budget of the baseline algorithms are illustrated in the bottom parts of Figs. \ref{fig:errors},  \ref{fig:errors-1}, and \ref{fig:errors-2}.

Fig. \ref{fig:sensor-1} shows that all algorithms except the random assignment produce high objective values. As for oracle evaluations illustrated in Fig. \ref{fig:sensor-2}, {\sc TS-greedy} and {\sc Fair-Greedy} need the most oracle evaluations, and {\sc TS-greedy} will violate the fairness constraint when $B=6,9,24,27$ and $30$. However, our faster version {\sc Fair-Threshold} with $\epsilon=0.1,0.2$ and $0.5$ significantly reduces the number of oracle evaluations from {\sc Fair-Greedy}. Furthermore, when the total budget is large, our algorithm {\sc Fair-Threshold} needs significantly fewer oracle evaluations.

For the experiments with $\delta$-approximate oracles ($\delta \neq 0$), the objective values are comparable to exact oracle excess settings when $B$ is large. Notably, our algorithms produce similar quality solutions to the baseline algorithms. As observed in the application of influence maximization, the number of oracle evaluations does not strictly increase as the budget increases. Still, the faster version of our algorithm {\sc Fair-Threshold} needs the least number of oracle calls. As for the errors of baselines, the {\sc TS-greedy} algorithm is more biased than {\sc IS-greedy}.

\section{Conclusion}
\label{sec:conclusion}
This paper initializes the study of the fairness in $k$-submodular maximization and investigates the application scenarios in influence diffusion and sensor placement. We proposed efficient fair approximation algorithms that match the best-known algorithms without a fairness constraint. Our theoretical results for approximate oracles expand the scalability of our algorithms to extensive real-world applications. This research fosters fairness and diversity in decision-making and machine-learning modeling.

\bibliography{references}

\bibliographystyle{tmlr}

\end{document}